\documentclass[journal]{IEEEtran}
\usepackage{amsmath,amsfonts}
\usepackage{algorithmic}
\usepackage{algorithm}
\usepackage{array}
\usepackage[caption=false,font=normalsize,labelfont=sf,textfont=sf]{subfig}
\usepackage{textcomp}
\usepackage{stfloats}
\usepackage{url}
\usepackage{verbatim}
\usepackage{graphicx}
\usepackage{cite}
\usepackage{array}
\usepackage{mathtools}
\usepackage{amssymb}
\usepackage{amsthm}
\usepackage{booktabs}
\usepackage{multirow}
\usepackage{xcolor}

\theoremstyle{plain}
\newtheorem{theorem}{Theorem}
\newtheorem{lemma}{Lemma}
\newtheorem{assumption}{Assumption}

\theoremstyle{remark}

\begin{document}

\title{Real-Time Projected Adaptive Control for Closed-Chain Co-Manipulative Continuum Robots}


\author{Rana Danesh,
        Farrokh~Janabi-Sharifi,
        and Farhad~Aghili,%
\thanks{Rana Danesh and Farrokh Janabi-Sharifi are with the Department of Mechanical, Industrial, and Mechatronics Engineering, Toronto Metropolitan University, Toronto, ON M5B 2K3, Canada (e-mail: rana.danesh@torontomu.ca; fsharifi@torontomu.ca).}%
\thanks{Farhad Aghili is with the Department of Mechanical, Industrial, and Aerospace Engineering, Concordia University, Montreal, QC H3G 1M8, Canada (e-mail: farhad.aghili@concordia.ca).}}

\maketitle

\begin{abstract}
In co-manipulative continuum robots (CCRs), multiple continuum arms cooperate by grasping a common flexible object, forming a closed-chain deformable mechanical system. The closed-chain coupling induces strong dynamic interactions and internal reaction forces. Moreover, in practical tasks, the flexible object's physical parameters are often unknown and vary between operations, rendering nominal model-based controllers inadequate.
This paper presents a projected adaptive control framework for CCRs formulated at the dynamic level. The coupled dynamics are expressed using the Geometric Variable Strain (GVS) representation, yielding a finite-dimensional model that accurately represents the system, preserves the linear-in-parameters structure required for adaptive control, and is suitable for real-time implementation. Closed-chain interactions are enforced through Pfaffian velocity constraints, and an orthogonal projection is used to express the dynamics in the constraint-consistent motion subspace. Based on the projected dynamics, an adaptive control law is developed to compensate online for uncertain dynamic parameters of both the continuum robots and the manipulated flexible object. Lyapunov analysis establishes closed-loop stability and convergence of the task-space tracking errors to zero. Simulation and experiments on a tendon-driven CCR platform validate the proposed framework in task-space regulation and trajectory tracking.
\end{abstract}

\begin{IEEEkeywords}
Co-manipulative continuum robots, adaptive control, closed-chain dynamics, geometric variable strain.
\end{IEEEkeywords}

\vspace{-1 em}
\section{Introduction}
\IEEEPARstart{C}{ontinuum} robots (CRs) have emerged as a powerful alternative to rigid manipulators in tasks requiring safe interaction, adaptability to confined environments, and compliant manipulation \cite{webster2010design,cianchetti2014soft,burgner2015continuum,laschi2016lessons}. 
Their intrinsic flexibility and distributed actuation enable operations that are difficult or infeasible for conventional robots, particularly in surgical intervention, inspection, and manipulation of delicate or deformable objects \cite{simaan2018medical,rus2015design,trivedi2008soft}. However, many practical tasks exceed the capability of a single CR and instead require co-manipulation, in which multiple continuum arms interact mechanically with a shared flexible object \cite{jalali2021dynamic,chikhaoui2018toward}. Such systems can provide enhanced load capacity, expanded workspace coverage, and increased task versatility, while preserving the compliance advantages of individual CRs \cite{lotfavar2017cooperative}. 
Despite these advantages, co-manipulation introduces strong mechanical coupling that transforms the overall system into a closed-chain deformable mechanism whose allowable motion is restricted by loop-closure constraints. This coupling fundamentally alters the system dynamics and makes both modeling and control considerably more challenging \cite{li2025collaborative}.

Although CCR modeling has progressed from kinematic and kinetostatic descriptions to dynamic and coupled continuum formulations, existing control strategies remain Jacobian-based and focused on feedback-level coordination of the individual arms rather than control of the CCR as a unified closed-chain dynamic system \cite{chikhaoui2018toward,peng2023modeling,norouzi2021switching}. In co-manipulation, the CRs and the shared flexible object are mechanically coupled and therefore cannot be treated independently. Their motion must remain physically compatible under the closed-chain constraints, which gives rise to internal reaction forces and moments that cannot be neglected in the system dynamics. Because the shared object possesses its own deformation states, it affects the dynamics not merely as a payload, but as a deformable subsystem within the closed chain. Moreover, the distributed dynamic properties of both the CRs and the manipulated flexible object may be uncertain, and these parametric uncertainties propagate through the coupled dynamics. This motivates the need for adaptive control to compensate for such uncertainty online within the closed-chain CCR system.

Accordingly, this paper develops an adaptive control framework for CCRs under closed-chain constraints and parametric uncertainty. Using the Geometric Variable Strain (GVS) representation\cite{renda2018discrete}, the coupled dynamics are expressed in a finite-dimensional form that accurately represents the coupled deformable dynamics, preserves the linear-in-parameters structure required for adaptive control, and remains computationally efficient for real-time implementation. Within this framework, an orthogonal projection is used to express the dynamics in the constraint-consistent motion subspace, thereby ensuring that the closed-chain system evolves only along motions that satisfy the imposed constraints. An adaptive control law is then designed to compensate online for uncertain dynamic parameters of both the CRs and the shared flexible object.
The main contributions of this paper are as follows:
\begin{itemize}
  \item Building on the GVS representation, a unified closed-chain dynamic formulation for CCRs is established and projected onto the constraint-consistent motion subspace.

 \item An adaptive control law is developed on the projected closed-chain dynamics to compensate online for uncertain dynamic parameters of both the CRs and the manipulated flexible object. A Lyapunov-based analysis is further provided to establish closed-loop stability and asymptotic convergence of the task-space tracking errors to zero.

 \item The proposed framework is validated in simulation and experiments on a tendon-driven CCR, for both regulation and trajectory tracking against non-adaptive controllers.

\end{itemize}

The remainder of the paper is organized as follows. 
Section~II reviews the related literature. 
Section~III introduces the GVS formulation and derives the closed-chain dynamics. 
Section~IV presents the projected adaptive controller along with its stability analysis. 
Simulation and experimental results are presented in Sections~V and~VI, respectively. 
Section~VII concludes the paper.
\vspace{-0.5 em}
\section{Related Work}

\subsection{Modeling of Co-Manipulative Continuum Robots}
Modeling CCRs is challenging due to the geometric interdependence and physical interactions among individual CRs. Most existing approaches therefore adopt either an independent \cite{ma2022collaborative,cheng2023dexterity,wang2019design} or a coupled \cite{lilge2022kinetostatic,mahoney2016reconfigurable} modeling strategy. Independent modeling treats each CR separately and combines their behaviors at the task level, whereas coupled modeling explicitly enforces interaction constraints among the robots, representing the CCR as a closed-chain system.

Most CCR formulations are built on an underlying backbone model for each CR, typically using either constant-curvature models \cite{nuelle2020modeling,wen2023modeling,wang2024development,dai2023novel,quaicoe2024cooperative} or Cosserat rod theory \cite{wang2021eccentric,mitros2022design}. Constant-curvature models are computationally efficient, but their simplifying assumptions can limit accuracy under contact and external loading \cite{wang2026strain}. 
Cosserat rod theory provides a description of distributed deformation and constitutive behavior, but its governing partial differential equations must be discretized to yield a computationally efficient model. In this work, the GVS approach is adopted to discretize the Cosserat rod by representing the continuous strain field with a finite set of basis functions, yielding explicit dynamic expressions that are suitable for control design and real-time implementation \cite{armanini2021discrete,boyer2020dynamics,mathew2022sorosim}. Moreover, the GVS formulation enables the dynamics of the CCR system to be modeled as a closed-chain system while preserving the linear-in-parameters structure required for adaptive control \cite{renda2020geometric}.

\vspace{-1 em}
\subsection{Control of Co-Manipulative Continuum Robots}
Most CCR control methods are developed for separated collaboration and rely on kinematic, Jacobian-based coordination \cite{yu2023model,tan2021synchronous,zhang2022cooperative,sabetian2019self}, rather than controlling the CCR as a unified closed-chain dynamic system. Within this kinematic framework, many CCR systems implement cooperation in open loop, often relying on structural or design optimization to improve motion accuracy, while master-slave teleoperation remains a common way to command coordinated motions \cite{de2013introducing,hwang2020k,li2020cadaveric,chen2021shurui}. These approaches can be sensitive to operator skill and unmodeled effects, particularly in cluttered environments or during contact \cite{ji2022omnidirectional}.

Closed-loop CCR control relies on real-time shape and pose feedback to compute tracking errors and update the inputs online. Most demonstrations use visual feedback, either from external cameras or onboard sensing between the arms \cite{song2021real,wu2021closed,yang2020closed,xu2009system,reiter2011learning,wang2023vision}. With such feedback, synchronous CCR control is commonly achieved by stacking the individual robot Jacobians into a unified Jacobian and computing coordinated commands for all end effectors within a single control law \cite{wang2025closed}. While this improves online coordination at the kinematic level, it does not explicitly account for the closed-chain dynamics of the coupled deformable robot-object system.

In co-manipulation, the CRs and the shared flexible object are mechanically coupled and their motion must remain physically compatible under the closed-chain constraints. This coupling gives rise to internal reaction forces and moments, while the object's flexibility introduces its own deformation states into the overall dynamics. As a result, the CCR cannot be treated as a collection of independently coordinated arms. Moreover, the distributed dynamic properties of both the CRs and the manipulated flexible object may be uncertain, and these uncertainties propagate through the coupled dynamics. Consequently, what remains missing is an adaptive control framework for CCRs that, to the best of our knowledge, is the first to be formulated directly on the unified closed-chain dynamics and is suitable for online compensation of uncertain parameters.

To address this gap, the following section formulates the CCR dynamics used for adaptive control under closed-chain constraints.

\section{Geometric Variable Strain Modeling}

The schematic configuration of the CCR system is shown in Fig.~\ref{fig:schematic}. The system consists of CRs forming a closed kinematic chain through a flexible object.  
Building on the GVS framework, this section formulates the kinematics and develops the constrained dynamic model of the CCR system \cite{renda2020geometric,renda2018discrete,armanini2021discrete,boyer2020dynamics}.

\begin{figure}
    \centering
    \includegraphics[width=0.4\textwidth,trim=6cm 2.5cm 6.5cm 2cm, clip]{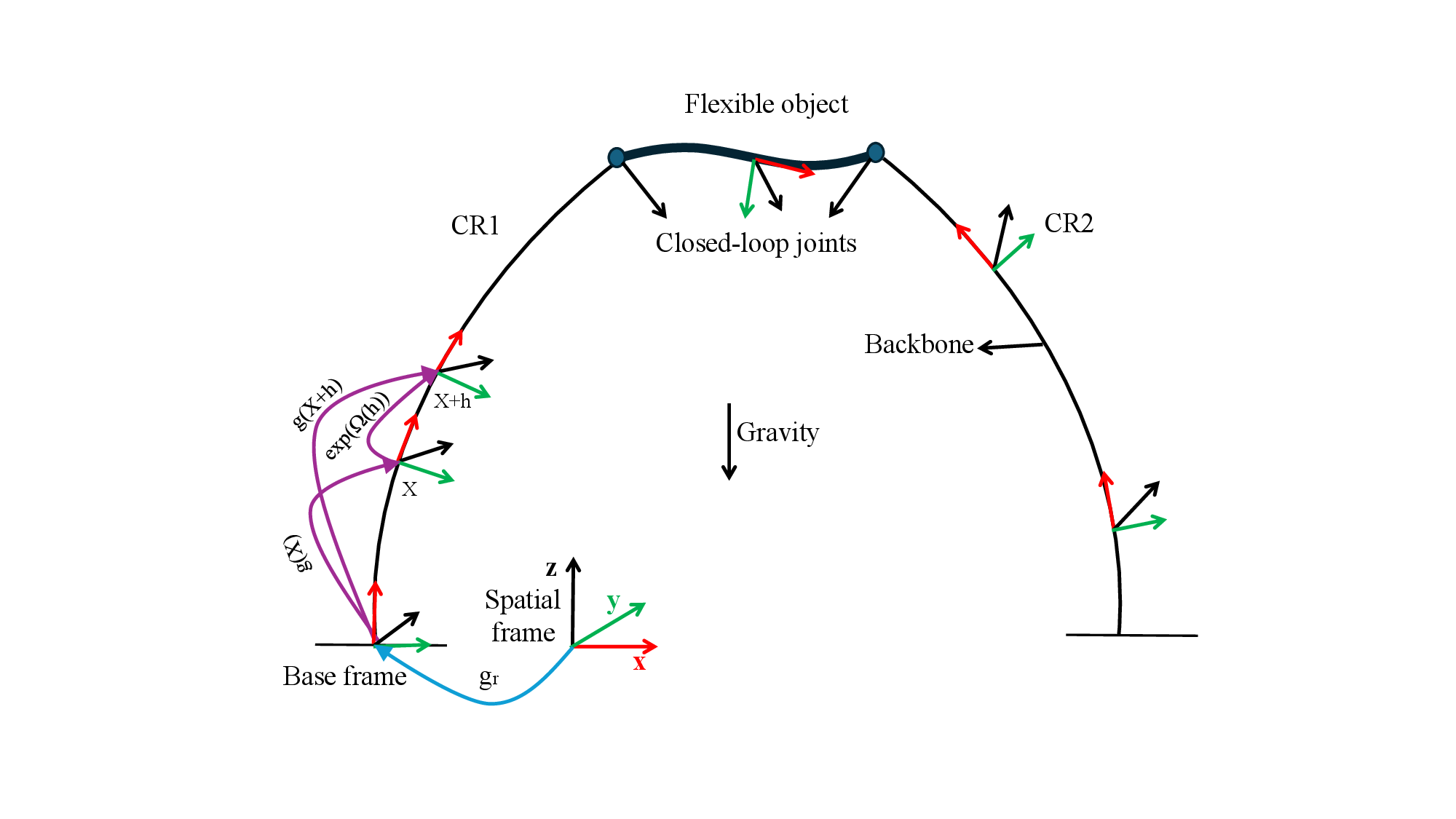}
\caption{Closed-loop configuration of the tendon-driven CCR system manipulating a flexible object, with backbone parameterization and reference frames.}
    \label{fig:schematic}
\end{figure}

\vspace{-1 em}

\subsection{GVS Kinematics}
A CR is modeled as a mechanically continuous system parameterized by the backbone coordinate {\small $X \in [0,L]$}, where {\small $X=0$} corresponds to the base frame and {\small $L$} denotes the backbone length. As shown in Fig.~\ref{fig:schematic},
a local body frame is attached to each cross section of the backbone. The pose of this frame with respect to the spatial frame
is described by the homogeneous transformation
{\small
\begin{equation}
\mathbf{g}(X)=
\begin{bmatrix}
\mathbf{R}(X) & \mathbf{p}(X) \\
\mathbf{0}^{\mathsf{T}} & 1
\end{bmatrix}
\in SE(3),
\label{eq:g_SE3}
\end{equation}}

\noindent where {\small$\mathbf{p}(X)\in\mathbb{R}^3$} denotes the position of the origin of the local
frame, and {\small$\mathbf{R}(X)\in SO(3)$} represents the orientation of the cross section
with respect to the spatial frame.

The kinematics are formulated on the Lie group {\small$SE(3)$}, and differentiation of the
configuration with respect to the backbone coordinate {\small$X$} and time {\small$t$} yields

{\small\begin{equation}
\mathbf{g}'=\mathbf{g}\widehat{\boldsymbol{\xi}},
\qquad
\dot{\mathbf{g}}=\mathbf{g}\widehat{\boldsymbol{\eta}},
\label{eq:g_kinematics}
\end{equation}}

\noindent where {\small $(\cdot)'$} and {\small $(\cdot)\dot{\;}$} denote derivatives with respect to {\small $X$} and {\small $t$},
respectively, and {\small $\widehat{(\cdot)}:\mathbb{R}^6\rightarrow\mathfrak{se}(3)$} is the
standard hat operator. The strain twist {\small $\boldsymbol{\xi}\in\mathbb{R}^6$} characterizes the local
deformation of the backbone, including bending, torsion, shear, and axial extension
per unit length. The body velocity twist {\small $\boldsymbol{\eta}\in\mathbb{R}^6$}
describes the instantaneous motion of each cross section expressed in the local
frame.
The solution of the body kinematic equation {\small $\mathbf{g}'=\mathbf{g}\widehat{\boldsymbol{\xi}}$},
illustrated in Fig.~\ref{fig:schematic}, over an interval of length $h$ can be written in Lie-group form as
{\small\begin{equation}
\mathbf{g}(X+h)=\mathbf{g}(X)\exp\!\big(\widehat{\boldsymbol{\Omega}}(h)\big),
\end{equation}}

\noindent where {\small $\boldsymbol{\Omega}(h)\in\mathbb{R}^6$} is the Magnus increment
associated with the strain twist over {\small $[X,X+h]$}. The increment {\small $\boldsymbol{\Omega}(h)$}
is computed using a fourth-order Magnus integrator based on Zanna collocation with two-stage Gauss
quadrature~\cite{murray2017mathematical,renda2020geometric}.

The equality of mixed partial derivatives,
{\small $(\dot{\mathbf{g}})'=(\mathbf{g}')^{\dot{}}$}, yields the compatibility condition
{\small
\begin{equation}
\boldsymbol{\eta}'=
\dot{\boldsymbol{\xi}}-
\mathrm{ad}_{\boldsymbol{\xi}}\boldsymbol{\eta},
\label{eq:compatibility}
\end{equation}
}

\noindent where {\small $\mathrm{ad}_{(\cdot)}\in\mathbb{R}^{6\times6}$} denotes the adjoint representation of 
{\small $\mathfrak{se}(3)$}. By using the identity {\small $(\mathrm{Ad}_{\mathbf{g}})'=\mathrm{Ad}_{\mathbf{g}}\mathrm{ad}_{\boldsymbol{\xi}}$}, the body velocity can be written analytically 

{\small
\begin{equation}
\boldsymbol{\eta}(X)=
\mathrm{Ad}_{\mathbf{g}^{-1}(X)}
\int_{0}^{X}
\mathrm{Ad}_{\mathbf{g}(s)}\dot{\boldsymbol{\xi}}(s)\,ds.
\label{eq:eta_analytic}
\end{equation}
} 

\noindent To obtain a finite-dimensional model, the strain field is approximated using a set
of basis functions,
{\small
\begin{equation}
\boldsymbol{\xi}(X)=
\mathbf{B}_{\boldsymbol{\xi}}(X)\mathbf{q}
+\boldsymbol{\xi}^{\ast}(X),
\label{eq:strain_basis}
\end{equation}
}

\noindent where {\small $\mathbf{B}_{\boldsymbol{\xi}}(X)\in\mathbb{R}^{6\times n}$} is the strain basis
matrix and {\small $\mathbf{q}\in\mathbb{R}^n$} denotes the generalized coordinates, with {\small $n$} representing the total number of degrees of freedom.
Substituting \eqref{eq:strain_basis} into \eqref{eq:eta_analytic} yields
{\small
\begin{equation}
\begin{aligned}
\boldsymbol{\eta}(X)=&
\mathrm{Ad}_{\mathbf{g}^{-1}(X)}
\int_{0}^{X}
\mathrm{Ad}_{\mathbf{g}(s)}
\mathbf{B}_{\boldsymbol{\xi}}(s)\,ds\;
\dot{\mathbf{q}} 
=
\mathbf{J}(\mathbf{q},X)\dot{\mathbf{q}},
\end{aligned}
\label{eq:geometric_jacobian}
\end{equation}
}

\noindent where {\small $\mathbf{J}(\mathbf{q},X)$} is the geometric Jacobian that maps the generalized
velocities to the body velocity along the backbone.

\vspace{-1 em}

\subsection{GVS Dynamics}
The dynamics of the CR are derived using Cosserat rod theory
formulated on the Lie algebra {\small $\mathfrak{se}(3)$}~\cite{renda2018discrete}. The strong form of the balance
of linear and angular momentum along the backbone is given by
{\small
\begin{equation}
\boldsymbol{\mathcal{M}}\dot{\boldsymbol{\eta}}
+
\mathrm{ad}^{*}_{\boldsymbol{\eta}}\,\boldsymbol{\mathcal{M}}\boldsymbol{\eta}
=
\boldsymbol{\mathcal{F}}_i'
+
\mathrm{ad}^{*}_{\boldsymbol{\xi}}\,\boldsymbol{\mathcal{F}}_i
+
\bar{\boldsymbol{\mathcal{F}}}_a
+
\bar{\boldsymbol{\mathcal{F}}}_e ,
\label{eq:cosserat_strong}
\end{equation}
}

\noindent where {\small $\boldsymbol{\mathcal{F}}_i$} denotes the internal wrench,
{\small $\bar{\boldsymbol{\mathcal{F}}}_a$} represents the distributed actuation wrench,
{\small $\bar{\boldsymbol{\mathcal{F}}}_e$} denotes the distributed external wrench, and
{\small $\boldsymbol{\mathcal{M}}\in\mathbb{R}^{6\times6}$} is the screw inertia matrix. The operator
{\small $\mathrm{ad}^{*}_{(\cdot)}$} denotes the coadjoint action on
{\small $\mathfrak{se}(3)$}~\cite{renda2020geometric}. A local body frame is attached to each cross section, with the {\small $X$} axis
aligned with the backbone direction and the {\small $Y$} and {\small $Z$} axes lying in the
symmetric cross-sectional plane, as shown in Fig. \ref{fig:schematic}. Under this choice of coordinates, the screw
inertia matrix is expressed as
{\small
\begin{equation}
\boldsymbol{\mathcal{M}} = \rho\,\mathrm{diag}(J_x, J_y, J_z, A, A, A),
\end{equation}
}

\noindent where {\small $\rho$} denotes the material density, {\small $A$} is the cross-sectional area, and
{\small $J_x$}, {\small $J_y$}, and {\small $J_z$} are the second moments of area associated with torsion
and bending of the backbone cross section.
The internal wrench {\small $\boldsymbol{\mathcal{F}}_i$} is formulated based on a
linear Kelvin--Voigt viscoelastic constitutive assumption~\cite{renda2014dynamic}, which captures both the
elastic response and the rate dependent dissipation of the continuum backbone.
Under this model, the internal wrench is related to the strain and strain rate
fields according to
{\small
\begin{equation}
\boldsymbol{\mathcal{F}}_i
=
\boldsymbol{\mathcal{K}}\boldsymbol{\xi}
+
\boldsymbol{\mathcal{D}}\,\dot{\boldsymbol{\xi}},
\label{eq:internal_wrench_kv}
\end{equation}
}

\noindent where the matrices {\small $\boldsymbol{\mathcal{K}}$} and {\small $\boldsymbol{\mathcal{D}}$} are the material
stiffness and viscous damping properties of the rod\cite{hussain2021compliant}.
The external loading acting on the CR includes distributed effects
due to gravity as well as concentrated loads arising from externally applied
forces or contact interactions. The resulting external wrench density is given by
{\small
\begin{equation}
\bar{\boldsymbol{\mathcal{F}}}_e
=
\boldsymbol{\mathcal{M}}
\,\mathrm{Ad}^{-1}_{\mathbf{g}_r\mathbf{g}}\,\boldsymbol{\mathcal{G}}
+
\delta(X-\bar{X})\,\boldsymbol{\mathcal{F}}_p,
\label{eq:external_wrench}
\end{equation}
}

\noindent where {\small
\(
\boldsymbol{\mathcal{G}}
\)
}represents the gravity twist in the spatial frame. The transformation {\small $\mathbf{g}_r\in SE(3)$}, as shown in Fig.\ref{fig:schematic}, maps quantities between the spatial
frame and the base frame of the continuum manipulator. The
operator {\small $\delta(\cdot)$} denotes the Dirac distribution, which localizes the
concentrated wrench {\small $\boldsymbol{\mathcal{F}}_p$} applied at the position
{\small $X=\bar{X}$} along the backbone.

\vspace{-0.5 em}

\subsection{Discrete Dynamics}
For the purpose of discretization, the strong form in
{\small (\ref{eq:cosserat_strong})} is reformulated as a weak statement using the
principle of virtual work~\cite{renda2018discrete}.
Let {\small $\delta\boldsymbol{\zeta}(X)\in\mathbb{R}^6$} denote an arbitrary virtual twist
field along the backbone. The weak form of the balance of momentum is then given by

{\small
\begin{equation}
\begin{aligned}
\int_{0}^{L}
\delta\boldsymbol{\zeta}^{\mathsf{T}}
(
&\boldsymbol{\mathcal{M}}\dot{\boldsymbol{\eta}}
+
\mathrm{ad}^{*}_{\boldsymbol{\eta}}\,\boldsymbol{\mathcal{M}}\boldsymbol{\eta}
- \boldsymbol{\mathcal{F}}_i'
-
\mathrm{ad}^{*}_{\boldsymbol{\xi}}\,\boldsymbol{\mathcal{F}}_i)
\,dX
= \\
&\int_{0}^{L}
\delta\boldsymbol{\zeta}^{\mathsf{T}}
(\bar{\boldsymbol{\mathcal{F}}}_a
+
\bar{\boldsymbol{\mathcal{F}}}_e)
\,dX .
\end{aligned}
\label{eq:weak_form}
\end{equation}
}

\noindent The weak form {\small \eqref{eq:weak_form}} is expressed in terms of the generalized coordinates using
{\small $\delta\boldsymbol{\zeta}=\mathbf{J}\delta\mathbf{q}$} and
{\small $\dot{\boldsymbol{\eta}}=\mathbf{J}\ddot{\mathbf{q}}+\dot{\mathbf{J}}\dot{\mathbf{q}}$}.
Substituting these relations into {\small \eqref{eq:weak_form}}, and incorporating the external load model {\small \eqref{eq:external_wrench}}, yields the final form of the generalized equation of motion:

{\small
\begin{equation}
\begin{aligned}
&\left[\int_{0}^{L}\mathbf{J}^{\mathsf{T}}\boldsymbol{\mathcal{M}}\mathbf{J}\,dX\right]\ddot{\mathbf{q}}
+
\left[\int_{0}^{L}\!\left(
\mathbf{J}^{\mathsf{T}}\boldsymbol{\mathcal{M}}\dot{\mathbf{J}}
+
\mathbf{J}^{\mathsf{T}}\mathrm{ad}^{*}_{\mathbf{J}\dot{\mathbf{q}}}
\boldsymbol{\mathcal{M}}\mathbf{J}
\right)\!dX\right]\dot{\mathbf{q}}-\\
&\int_{0}^{L}\mathbf{J}^{\mathsf{T}}
\left(
\boldsymbol{\mathcal{F}}_i'
+
\mathrm{ad}^{*}_{\boldsymbol{\xi}}\boldsymbol{\mathcal{F}}_i
\right)dX
=
\left[\int_{0}^{L}\mathbf{J}^{\mathsf{T}}
\bar{\boldsymbol{\mathcal{F}}}_a
dX\right]+
\mathbf{J}(\bar{X})^{\mathsf{T}}\boldsymbol{\mathcal{F}}_{p}\\
&+\left[\int_{0}^{L}\mathbf{J}^{\mathsf{T}}\boldsymbol{\mathcal{M}}
\mathrm{Ad}^{-1}_{\mathbf{g}}\,dX\right]
\mathrm{Ad}^{-1}_{\mathbf{g}_r}\boldsymbol{\mathcal{G}} .
\end{aligned}
\label{eq:generalized_dynamics}
\end{equation}
}

\noindent Collecting the integral terms, {\small (\ref{eq:generalized_dynamics})} reduces to classical form
{\small
\begin{equation}
\mathbf{M}\ddot{\mathbf{q}}
+\mathbf{C}\dot{\mathbf{q}}
+\mathbf{K}\mathbf{q}
+\mathbf{D}\dot{\mathbf{q}}
=\boldsymbol{\tau}
+\mathbf{F}_{\mathrm{ext}}
+\mathbf{F}_{g},
\label{eq:compact_dynamics}
\end{equation}
}

\noindent where {\small $\mathbf{M},\mathbf{C},\mathbf{K},\mathbf{D}\in\mathbb{R}^{n\times n}$} denote the generalized inertia, Coriolis, stiffness, and damping matrices, respectively, and {\small $\boldsymbol{\tau},\mathbf{F}_{\mathrm{ext}},\mathbf{F}_{g}\in\mathbb{R}^{n}$} denote the actuation input, the generalized external force vector, and the generalized gravitational force vector, respectively~\cite{renda2020geometric,hussain2021compliant}.

\vspace{-1 em}

\subsection{Closed-Chain Dynamics}
The manipulation task addressed in this work inherently imposes
loop closure constraints arising from the mutual coupling between the CRs and
the manipulated flexible object. To systematically derive the equations of motion of the resulting closed-chain
system, we adopt a Lie group  formulation for CRs as developed
in \cite{armanini2021discrete}. The loop closure conditions are expressed at the velocity level in Pfaffian form
as a set of linear constraints on the generalized velocities. Specifically, the
closed-chain kinematics are described by

{\small
\begin{equation}
\mathbf{A}(\mathbf{q})\,\dot{\mathbf{q}}=\mathbf{0},
\label{eq:pfaffian_constraint}
\end{equation}}

\noindent where {\small $\mathbf{A}(\mathbf{q})\in\mathbb{R}^{n_c\times n}$} denotes the constraint
Jacobian and {\small $n_c$} is the number of constraints. {\small $\mathbf{A}$} is constructed by enforcing the kinematic compatibility
at each closed loop joint connecting two frames on bodies {\small $A$} and {\small $B$}. For the
{\small $i$}-th joint, the motion of each joint frame is described by body Jacobians
{\small $\mathbf{J}_{A_i}(\mathbf{q}),\mathbf{J}_{B_i}(\mathbf{q})\in\mathbb{R}^{6\times n}$},
which map the generalized velocity {\small $\dot{\mathbf{q}}$} to the corresponding body
twists, both expressed in the same joint coordinate frame. The relative twist
across the closed loop joint is therefore given by
{\small
\begin{equation}
\boldsymbol{\eta}_{i}
=
\big(\mathbf{J}_{A_i}(\mathbf{q})-\mathbf{J}_{B_i}(\mathbf{q})\big)\dot{\mathbf{q}}.
\end{equation}}

Closed-loop constraints restrict this relative motion by eliminating twist
components that are incompatible with the joint type. These restrictions are
encoded through a matrix {\small$\mathbf{B}_{p,i}\in\mathbb{R}^{6\times n_{ci}}$}, whose
columns span the constraint wrench subspace associated with the joint. By the
principle of dual orthogonality, allowable relative motion must be orthogonal to
the constraint wrenches, which yields the Pfaffian constraint

{\small
\begin{equation}
\mathbf{B}_{p,i}^{\mathsf{T}}\boldsymbol{\eta}_{i}=0.
\end{equation}}

\noindent Substituting the Jacobian expression of the relative twist and stacking the Pfaffian constraints for all closed loop joints yields the global constraint Jacobian
{\small
\begin{equation}
\mathbf{A}(\mathbf{q})
=
\begin{bmatrix}
\mathbf{B}_{p,i}^{\mathsf{T}}
\big(\mathbf{J}_{A_i}(\mathbf{q})-\mathbf{J}_{B_i}(\mathbf{q})\big)
\end{bmatrix}_{i=1}^{n_c}
\in \mathbb{R}^{n_c \times n}.
\label{eq:A_matrix_form}
\end{equation}
}

The internal reaction forces and moments required to enforce the closed-chain constraints are introduced through the Lagrange multiplier vector {\small $\boldsymbol{\lambda}\in\mathbb{R}^{n_c}$}. Accordingly, the generalized dynamics is
augmented by the constraint forces as
{\small
\begin{equation}
\begin{aligned}
\mathbf{M}\ddot{\mathbf{q}}
+
\mathbf{C}\dot{\mathbf{q}}
+
\mathbf{K}\mathbf{q}
+
\mathbf{D}\dot{\mathbf{q}}
=
\boldsymbol{\tau}
+
\mathbf{F_{ext}}
+
\mathbf{F_{g}}
+
\mathbf{A}^{\mathsf{T}}\boldsymbol{\lambda}.
\label{eq:compact_dynamics_constrain}
\end{aligned}
\end{equation}
}

The coupled system {\small \eqref{eq:compact_dynamics_constrain}} can be
solved explicitly for the Lagrange multipliers
{\small $\boldsymbol{\lambda}$} and substituted back into the dynamics to
obtain an equivalent closed-chain equation expressed solely in
terms of the generalized coordinates. The constrained dynamics are expressed using an orthogonal projection onto the null space of the constraint Jacobian, {\small $\mathbf{A}(\mathbf{q})$}, as formalized below.

\begin{lemma}{\normalfont\cite{aghili2011projection}}
\label{lem:P_orth}

Let {\small $\mathbf{A}(\mathbf{q})\in\mathbb{R}^{n_c\times n}$} denote the
constraint Jacobian and assume it has full row rank. Define
{\small
\begin{align}
\mathbf{P}(\mathbf{q})
=
\mathbf{I}
-
\mathbf{A}^{+}(\mathbf{q})\,\mathbf{A}(\mathbf{q}),
\label{eq:P_def}
\end{align}
}

\noindent where {\small $\mathbf{A}^{+}$} is the Moore–Penrose inverse.
Then {\small $\mathbf{P}$} is an orthogonal projector satisfying
{\small
\begin{align}
\mathbf{P}^{\mathsf{T}}=\mathbf{P},\qquad
\mathbf{P}^2=\mathbf{P},
\end{align}
}
and it eliminates the constraint directions
{\small
\begin{align}
\mathbf{A}\mathbf{P}=\mathbf{0},\qquad
\mathbf{P}\mathbf{A}^{\mathsf{T}}=\mathbf{0}.
\end{align}
}
\end{lemma}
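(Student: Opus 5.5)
Since $\mathbf{A}(\mathbf{q})$ has full row rank $n_c$, the matrix $\mathbf{A}\mathbf{A}^{\mathsf{T}}\in\mathbb{R}^{n_c\times n_c}$ is invertible. The Moore--Penrose inverse therefore takes the explicit form
\begin{equation*}
\mathbf{A}^{+}=\mathbf{A}^{\mathsf{T}}(\mathbf{A}\mathbf{A}^{\mathsf{T}})^{-1}.
\end{equation*}
I would first verify that this matrix satisfies the four Penrose conditions, which is routine: $\mathbf{A}\mathbf{A}^{+}=\mathbf{I}_{n_c}$ is symmetric, and $\mathbf{A}^{+}\mathbf{A}$ is symmetric by inspection. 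Uniqueness of the pseudoinverse then lets me work with this formula throughout. All four claimed properties will follow from two elementary facts: $\mathbf{A}\mathbf{A}^{+}=\mathbf{I}_{n_c}$, and $\mathbf{A}^{+}\mathbf{A}$ is symmetric.

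For symmetry, I note that
\begin{equation*}
\mathbf{A}^{+}\mathbf{A}=\mathbf{A}^{\mathsf{T}}(\mathbf{A}\mathbf{A}^{\mathsf{T}})^{-1}\mathbf{A}.
\end{equation*}
Its transpose is $\mathbf{A}^{\mathsf{T}}\big((\mathbf{A}\mathbf{A}^{\mathsf{T}})^{-1}\big)^{\mathsf{T}}\mathbf{A}$, and $(\mathbf{A}\mathbf{A}^{\mathsf{T}})^{-1}$ is symmetric, so the product equals itself. Hence $\mathbf{P}^{\mathsf{T}}=\mathbf{P}$.

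For idempotence, I expand
\begin{equation*}
\mathbf{P}^2=\mathbf{I}-2\mathbf{A}^{+}\mathbf{A}+\mathbf{A}^{+}(\mathbf{A}\mathbf{A}^{+})\mathbf{A}.
\end{equation*}
Using $\mathbf{A}\mathbf{A}^{+}=\mathbf{I}$, the last term reduces to $\mathbf{A}^{+}\mathbf{A}$, so $\mathbf{P}^2=\mathbf{P}$. Symmetry together with idempotence is exactly the definition of an orthogonal projector.

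For the annihilation identities, I compute $\mathbf{A}\mathbf{P}=\mathbf{A}-(\mathbf{A}\mathbf{A}^{+})\mathbf{A}=\mathbf{A}-\mathbf{A}=\mathbf{0}$. Transposing this and using the symmetry of $\mathbf{P}$ gives $\mathbf{P}\mathbf{A}^{\mathsf{T}}=(\mathbf{A}\mathbf{P})^{\mathsf{T}}=\mathbf{0}$. As a remark, I would add that $\mathbf{P}$ projects onto $\ker\mathbf{A}$, the space of constraint-consistent velocities. This is because $\mathbf{A}\mathbf{P}=\mathbf{0}$ gives $\operatorname{range}\mathbf{P}\subseteq\ker\mathbf{A}$, and any $\mathbf{v}\in\ker\mathbf{A}$ satisfies $\mathbf{P}\mathbf{v}=\mathbf{v}$.

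None of these steps is a genuine obstacle. The only point needing care is the full-row-rank hypothesis, which guarantees invertibility of $\mathbf{A}\mathbf{A}^{\mathsf{T}}$ and hence the identity $\mathbf{A}\mathbf{A}^{+}=\mathbf{I}$. If one wanted to drop that assumption, I would instead argue directly from the Penrose identities $\mathbf{A}\mathbf{A}^{+}\mathbf{A}=\mathbf{A}$ and $(\mathbf{A}^{+}\mathbf{A})^{\mathsf{T}}=\mathbf{A}^{+}\mathbf{A}$, which yield the same conclusions verbatim.
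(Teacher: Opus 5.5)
Your proof is correct and complete. The paper gives no argument of its own and simply defers to the cited work of Aghili; there the properties follow from the same Moore--Penrose identities you use, namely $\mathbf{A}\mathbf{A}^{+}\mathbf{A}=\mathbf{A}$ and $(\mathbf{A}^{+}\mathbf{A})^{\mathsf{T}}=\mathbf{A}^{+}\mathbf{A}$. Your closing remarks are worth keeping. They show that full row rank is needed only for the explicit formula $\mathbf{A}^{+}=\mathbf{A}^{\mathsf{T}}(\mathbf{A}\mathbf{A}^{\mathsf{T}})^{-1}$, not for the four stated properties. They also show that $\mathbf{P}$ fixes every $\mathbf{v}\in\ker\mathbf{A}$, which is exactly what the paper later uses to obtain $\mathbf{P}\mathbf{s}=\mathbf{s}$.
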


Premultiplying the constrained dynamics {\small\eqref{eq:compact_dynamics_constrain}} by {\small $\mathbf{P}$} eliminates
the constraint forces since
{\small $\mathbf{P}\mathbf{A}^{\mathsf{T}}\boldsymbol{\lambda}=\mathbf{0}$}. The
resulting projected dynamics is given by 
{\small
\begin{align}
\mathbf{P}
\Big(
\mathbf{M}\ddot{\mathbf{q}}
+
\mathbf{C}\dot{\mathbf{q}}
+
\mathbf{K}\mathbf{q}
+
\mathbf{D}\dot{\mathbf{q}}
\Big)
=
\mathbf{P}
\Big(
\boldsymbol{\tau}
+
\mathbf{F_{ext}}
+
\mathbf{F}_{g}
\Big),
\label{eq:projected_dynamics}
\end{align}
}

\noindent which evolves entirely in the tangent space of the constraint
manifold.

\vspace{-0.5 em}

\section{Adaptive Control}
This section develops a projected adaptive control framework for the closed-chain CCR system. The adaptive controller is designed to compensate for parametric uncertainties while satisfying the closed-chain constraints.

\vspace{-1 em}
\subsection{Inertial Parameterization}
To develop a model-based adaptive controller, the dynamic equation in
{\small \eqref{eq:compact_dynamics_constrain}} is exploited through its linear dependence on a set
of constant physical parameters. In particular, the screw inertia model
{\small\begin{equation}
\boldsymbol{\mathcal{M}} = \rho\,\mathrm{diag}(J_x, J_y, J_z, A, A, A)
\label{eq:M_screw_param}
\end{equation}}

\noindent is adopted, and a reduced set of four parameters per soft link, including each CR and the flexible object, is
treated as unknown constants to be estimated online. For the {\small$k$}-th soft link, the parameter vector is defined as
{\small
\begin{equation}
\boldsymbol{\theta}_k=
\begin{bmatrix}
\theta_{x,k}\!&\!\theta_{y,k}\!&\!\theta_{z,k}\!&\!\theta_{A,k}
\end{bmatrix}^{\!{\mathsf{T}}}
=\rho_k
\begin{bmatrix}
 J_{x,k}\ J_{y,k}\ J_{z,k}\ A_k
\end{bmatrix}^{\!{\mathsf{T}}}.
\label{eq:theta_i_def}
\end{equation}
}

For a system composed of {\small $N$} soft links, the full inertial parameter vector is
constructed by stacking the individual link parameters as
{\small
\begin{equation}
\boldsymbol{\theta}
=
\begin{bmatrix}
\boldsymbol{\theta}_1^{{\mathsf{T}}}\!&\!
\boldsymbol{\theta}_2^{{\mathsf{T}}}\!&\!
\cdots\!&\!
\boldsymbol{\theta}_N^{{\mathsf{T}}}
\end{bmatrix}^{{\mathsf{T}}}
\in\mathbb{R}^{4N}.
\label{eq:theta_global_def}
\end{equation}
}

The screw inertia matrix defined in {\small \eqref{eq:M_screw_param}} contributes explicitly
to the generalized mass matrix {\small $\mathbf{M}$}, the Coriolis matrix
{\small $\mathbf{C}$}, and the gravity force
{\small $\mathbf{F_{g}}$}. To express these terms in a form that is linear with
respect to the unknown parameter vector {\small $\boldsymbol{\theta}$}, the screw inertia
matrix is decomposed into a weighted sum of constant
basis matrices. Specifically, the following selector matrices are introduced:
{\small
\begin{equation}
\begin{aligned}
\mathbf{E}_x &= \mathrm{diag}(1,0,0,0,0,0),\;
\mathbf{E}_y = \mathrm{diag}(0,1,0,0,0,0),\\
\mathbf{E}_z &= \mathrm{diag}(0,0,1,0,0,0),\;
\mathbf{E}_A = \mathrm{diag}(0,0,0,1,1,1).
\end{aligned}
\label{eq:E_matrices}
\end{equation}
}

Using the selector matrices in {\small \eqref{eq:E_matrices}}, the screw inertia matrix
{\small \eqref{eq:M_screw_param}} for the {\small $k$}-th soft link can be written as
{\small
\begin{equation}
\boldsymbol{\mathcal{M}}_k
=
\theta_{x,k} \mathbf{E}_x
+
\theta_{y,k} \mathbf{E}_y
+
\theta_{z,k} \mathbf{E}_z
+
\theta_{A,k} \mathbf{E}_A.
\label{eq:M_linear_decomp}
\end{equation}
}

\noindent The inertia description of the full system is then obtained by assembling the screw inertia matrices of all soft links.

\subsection{Linear Parameterization of the Dynamics}
The generalized inertia contribution is expressed in a form that is
linear with respect to the unknown parameter vector
{\small $\boldsymbol{\theta}$}. By substituting the decomposition of the
screw inertia matrix into the definition of the generalized mass matrix,
the inertia term can be written as a weighted sum of configuration-dependent
integrals, where the unknown physical parameters appear as constant scalar
coefficients as

{\small
\begin{equation}
\mathbf{M}\ddot{\mathbf{q}}
=
\sum_{i\in\{x,y,z,A\}}
\theta_i
\left(
\int_{0}^{L}
\mathbf{J}^{\mathsf{T}}\mathbf{E}_i\mathbf{J}\,dX
\right)
\ddot{\mathbf{q}}
=
\mathbf{Y}_{M}\boldsymbol{\theta},
\label{eq:Mddq_compact_all}
\end{equation}
}

\noindent
where {\small $\mathbf{Y}_{M}$} denotes the inertia regressor matrix and
{\small $\boldsymbol{\theta}$} collects the constant inertial parameters.
A similar decomposition is applied to the Coriolis term
{\small $\mathbf{C}\dot{\mathbf{q}}$}.
By substituting the parameter decomposition into the generalized dynamics,
the Coriolis contribution can be written in parameter-linear form as

{\small
\begin{equation}
\mathbf{C}\dot{\mathbf{q}}
=
\smash{\sum_{i\in\{x,y,z,A\}}}\!
\theta_i(
\int_{0}^{L}\!
\mathbf{J}^{\mathsf{T}}(
\mathbf{E}_i\dot{\mathbf{J}}
+\mathrm{ad}^{*}_{\mathbf{J}\dot{\mathbf{q}}}\mathbf{E}_i\mathbf{J}
)\,dX
)\
\dot{\mathbf{q}}
=
\mathbf{Y}_{C}\boldsymbol{\theta},
\end{equation}
}

\noindent
where {\small $\mathbf{Y}_{C}$} denotes the Coriolis regressor matrix.
The gravitational term in \eqref{eq:compact_dynamics} admits a parameter-linear representation given by
{\small
\begin{equation}
\mathbf{F}_g
=
\smash{\sum_{i\in\{x,y,z,A\}}}\!
\theta_i\!
\left(
\int_{0}^{L}\!
\mathbf{J}^{\mathsf{T}}\mathbf{E}_i
\mathrm{Ad}^{-1}_{\mathbf{g}}\,dX
\right)
\mathrm{Ad}^{-1}_{\mathbf{g}_r}\boldsymbol{\mathcal{G}}
=
\mathbf{Y}_g\boldsymbol{\theta},
\label{eq:Fg_compact}
\end{equation}
}

\noindent
where {\small $\mathbf{Y}_g$} denotes the gravity regressor matrix.

Rearranging \eqref{eq:compact_dynamics_constrain} and grouping the inertial parameters yields
{\small
\begin{equation}
\begin{aligned}
\mathbf{M}\ddot{\mathbf{q}}
+\mathbf{C}\dot{\mathbf{q}}
-\mathbf{F}_g
=
\mathbf{Y}\boldsymbol{\theta}
=
\boldsymbol{\tau}
-\mathbf{K}\mathbf{q}
-\mathbf{D}\dot{\mathbf{q}}
+\mathbf{F}_{\mathrm{ext}}
+\mathbf{A}^{\mathsf{T}}\boldsymbol{\lambda}.
\end{aligned}
\label{eq:regressor_rearranged}
\end{equation}
}

\noindent
The combined regressor is defined as
{\small
\begin{equation}
\mathbf{Y}(\mathbf{q},\dot{\mathbf{q}},\ddot{\mathbf{q}})
=
\mathbf{Y}_M
+
\mathbf{Y}_C
-
\mathbf{Y}_g.
\label{eq:Y_total_def}
\end{equation}
}

Applying the orthogonal projector {\small$\mathbf{P}(\mathbf{q})$}
defined in Lemma{\small~\ref{lem:P_orth}} to the parameter linear
form {\small\eqref{eq:regressor_rearranged}} yields the constraint consistent
dynamics

{\small
\begin{equation}
\mathbf{P}\Big(
\mathbf{M}\ddot{\mathbf{q}}
+\mathbf{C}\dot{\mathbf{q}}
-\mathbf{F}_g
\Big)
=
\mathbf{P}\mathbf{Y}(\mathbf{q},\dot{\mathbf{q}},\ddot{\mathbf{q}})\boldsymbol{\theta}
=
\mathbf{P}\Big(
\boldsymbol{\tau}
-\mathbf{K}\mathbf{q}
-\mathbf{D}\dot{\mathbf{q}}
+\mathbf{F}_{\mathrm{ext}}
\Big).
\label{eq:projected_full_dyn}
\end{equation}
}

\subsection{Reference Signals and Sliding Variable}
Let the task-space tracking error be defined as
{\small
\begin{align}
\mathbf{e}
&=
\mathbf{x}_d-\mathbf{x},
\qquad
\dot{\mathbf{e}}
=
\dot{\mathbf{x}}_d-\dot{\mathbf{x}},
\label{eq:e_def}
\end{align}
}

\noindent where {\small $\mathbf{e},\dot{\mathbf{e}}\in\mathbb{R}^3$}. The vector
{\small $\mathbf{x}\in\mathbb{R}^3$} denotes the position of a point of
interest on the CCR, modeled as a single closed-chain system. In this
study, {\small $\mathbf{x}$} is chosen as the midpoint position of the
flexible object. The velocity {\small $\dot{\mathbf{x}}\in\mathbb{R}^3$}
corresponds to the translational component of the body
velocity {\small $\boldsymbol{\eta}(X)\in\mathbb{R}^6$} obtained from the
GVS formulation. Here, {\small $\mathbf{x}_d\in\mathbb{R}^3$} and {\small $\dot{\mathbf{x}}_d\in\mathbb{R}^3$} denote the desired task-space trajectory and the corresponding desired velocity, respectively.
A reference task-space velocity is defined as
{\small
\begin{align}
\dot{\mathbf{x}}_r
=
\dot{\mathbf{x}}_d
+\mathbf{\Lambda}\mathbf{e},
\label{eq:xr_def}
\end{align}
}

\noindent where {\small $\mathbf{\Lambda}\in\mathbb{R}^{3\times3}$} is a symmetric positive definite matrix, which defines first-order reference error dynamics.
The corresponding joint-space reference velocity is obtained via differential inverse kinematics using the geometric Jacobian defined in {\small \eqref{eq:geometric_jacobian}}. To enforce consistency with the closed-chain constraints, the reference velocity is projected onto the null space of the constraint Jacobian:
{\small
\begin{align}
\dot{\mathbf{q}}_r
=
\mathbf{P}\,\mathbf{J}^{\dagger}(\mathbf{q})\,\dot{\mathbf{x}}_r,
\end{align}
}

\noindent where {\small $\mathbf{J}^{\dagger}(\mathbf{q})$} denotes the Jacobian pseudoinverse.

\begin{assumption}
\label{ass:constraint_consistent}
The reference joint velocity is constructed such that it satisfies the Pfaffian velocity constraint
{\small
\begin{equation}
\mathbf{A} \dot{\mathbf{q}}_r = \mathbf{0},
\label{eq:ref_constraint}
\end{equation}
}
which is ensured by the projection-based reference generation.
\end{assumption}

The sliding variable is defined as
{\small
$
\mathbf{s}=
\dot{\mathbf{q}} - \dot{\mathbf{q}}_r.
\label{eq:s_def}
$}
Since both the actual and reference velocities satisfy the
Pfaffian constraint,
{\small $\mathbf{A}\dot{\mathbf{q}}= \mathbf{A}\dot{\mathbf{q}}_r=\mathbf{0}$},
it follows that
{\small
\begin{equation}
\mathbf{A}\mathbf{s}=\mathbf{0}
\;\;\Longrightarrow\;\;
\mathbf{s}\in\mathcal{N}\!\left(\mathbf{A}(\mathbf{q})\right),
\end{equation}
}

\noindent where $\mathcal{N}(\cdot)$ denotes the null space of the matrix.
From Lemma \ref{lem:P_orth}, the orthogonal projector
{\small $\mathbf{P}$}
maps onto this null space, yielding
{\small
\begin{equation}
\mathbf{P}\mathbf{s}=\mathbf{s}.
\label{eq:Ps_equals_s}
\end{equation}
}

From the linear parameterization property established in
{\small \eqref{eq:projected_full_dyn}}, the projected reference dynamics satisfies
{\small
\begin{equation}
\mathbf{P}
\Big(
\mathbf{M}\ddot{\mathbf{q}}_r
+
\mathbf{C}\dot{\mathbf{q}}_r
-
\mathbf{F}_g
\Big)
=
\mathbf{P}
\mathbf{Y}(\mathbf{q},\dot{\mathbf{q}},
\dot{\mathbf{q}}_r,\ddot{\mathbf{q}}_r)
\boldsymbol{\theta}.
\label{eq:proj_ref_linear}
\end{equation}
}

If {\small $\mathbf{s}\to\mathbf{0}$} and the Jacobian
{\small $\mathbf{J}$} remains full rank, then from {\small \eqref{eq:xr_def}} and sliding variable,
the error dynamics is
{\small
\[
\dot{\mathbf{e}}+\mathbf{\Lambda}\mathbf{e}\to\mathbf{0},
\]
}

\noindent which implies exponential convergence
{\small $\mathbf{e}(t)\to\mathbf{0}$}.
Subtracting {\small \eqref{eq:proj_ref_linear}} from
{\small \eqref{eq:projected_full_dyn}} and using the sliding
variable definition {\small $\dot{\mathbf{s}}=\ddot{\mathbf{q}}-\ddot{\mathbf{q}}_r$}
yields the projected error dynamics
{\small
\begin{align}
\mathbf{P}\Big(
\mathbf{M}\dot{\mathbf{s}}
+
\mathbf{C}\mathbf{s}
\Big)
&=
\mathbf{P}\Big(
\boldsymbol{\tau}
-
\mathbf{K}\mathbf{q}
-
\mathbf{D}\dot{\mathbf{q}}
+
\mathbf{F}_{\mathrm{ext}}
\Big)
\nonumber \\
&\quad
-
\mathbf{P}\mathbf{Y}\big(
\mathbf{q},\dot{\mathbf{q}},
\dot{\mathbf{q}}_r,\ddot{\mathbf{q}}_r
\big)\boldsymbol{\theta}.
\label{eq:s_dyn_open}
\end{align}
}

\subsection{Projected Adaptive Control Law}

Define the projected control input
{\small
$
\boldsymbol{\tau}_p
:=
\mathbf{P}(\mathbf{q})\,\boldsymbol{\tau},
\label{eq:tau_p_def}
$
}
which represents the constraint-consistent component of the actuation.
The projected adaptive control law is given by
{\small
\begin{align}
\boldsymbol{\tau}_p
&=
\mathbf{P}\mathbf{K}\mathbf{q}
+
\mathbf{P}\mathbf{D}\dot{\mathbf{q}}
-
\mathbf{P}\mathbf{F}_{\mathrm{ext}}
+
\mathbf{P}\mathbf{Y}(\mathbf{q},\dot{\mathbf{q}},
\dot{\mathbf{q}}_r,\ddot{\mathbf{q}}_r)
\hat{\boldsymbol{\theta}}
-
\mathbf{K}_s\mathbf{s},
\label{eq:u_adapt}
\end{align}
}

\noindent where {\small $\mathbf{K}_s\in\mathbb{R}^{n\times n}$} is a symmetric positive definite matrix, {\small $\mathbf{K}_s\succ\mathbf{0}$}, and {\small $\hat{\boldsymbol{\theta}}$} denotes the estimate of the unknown constant parameter vector {\small $\boldsymbol{\theta}$}.
Define the parameter estimation error
{\small
$
\tilde{\boldsymbol{\theta}}
:=
\hat{\boldsymbol{\theta}}-\boldsymbol{\theta}
\label{eq:theta_tilde}
$}
 and substituting the control law {\small \eqref{eq:u_adapt}} into
the projected dynamics {\small \eqref{eq:s_dyn_open}}
yields the closed-loop error system
{\small
\begin{align}
\mathbf{P}\Big(
\mathbf{M}\dot{\mathbf{s}}
+
\mathbf{C}\mathbf{s}
\Big)
=
-\mathbf{K}_s\mathbf{s}
+
\mathbf{P}\mathbf{Y}(\cdot)\tilde{\boldsymbol{\theta}},
\label{eq:s_dyn_closed}
\end{align}
}

\noindent where {\small $\mathbf{Y}(\cdot)
=\mathbf{Y}(\mathbf{q},\dot{\mathbf{q}},
\dot{\mathbf{q}}_r,\ddot{\mathbf{q}}_r)$}.

\begin{lemma}
\label{lem:skew}
Consider the dynamics
{\small \eqref{eq:generalized_dynamics}}–{\small \eqref{eq:compact_dynamics}},
where {\small $\mathbf{M}$} and
{\small $\mathbf{C}$}
are defined from the variational formulation.
Then the matrix
{\small $\dot{\mathbf{M}}-2\mathbf{C}$}
is skew-symmetric, and consequently
{\small
\begin{equation}
\mathbf{s}^{\mathsf{T}}\big(\dot{\mathbf{M}}-2\mathbf{C}\big)\mathbf{s}=0,
\qquad
\forall\,\mathbf{s}\in\mathbb{R}^n.
\end{equation}
}
\end{lemma}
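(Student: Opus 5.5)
The plan is a direct computation from the integral definitions of $\mathbf{M}$ and $\mathbf{C}$ in \eqref{eq:generalized_dynamics}, arranged so that the two $\dot{\mathbf{J}}$ contributions cancel in a transpose pair. The gyroscopic term is handled last and is where the difficulty lies. I write $\boldsymbol{\eta}=\mathbf{J}\dot{\mathbf{q}}$ and use the convention $\mathrm{ad}^{*}_{\boldsymbol{\eta}}=-\mathrm{ad}_{\boldsymbol{\eta}}^{\mathsf{T}}$. Because $\boldsymbol{\mathcal{M}}$ is constant and symmetric for each soft link by \eqref{eq:M_linear_decomp}, differentiating $\mathbf{M}=\int_0^L\mathbf{J}^{\mathsf{T}}\boldsymbol{\mathcal{M}}\mathbf{J}\,dX$ under the integral gives
\begin{equation*}
\dot{\mathbf{M}}=\int_0^L\big(\dot{\mathbf{J}}^{\mathsf{T}}\boldsymbol{\mathcal{M}}\mathbf{J}+\mathbf{J}^{\mathsf{T}}\boldsymbol{\mathcal{M}}\dot{\mathbf{J}}\big)\,dX .
\end{equation*}
Subtracting $2\mathbf{C}$ then yields
\begin{equation*}
\dot{\mathbf{M}}-2\mathbf{C}=\int_0^L\big(\dot{\mathbf{J}}^{\mathsf{T}}\boldsymbol{\mathcal{M}}\mathbf{J}-\mathbf{J}^{\mathsf{T}}\boldsymbol{\mathcal{M}}\dot{\mathbf{J}}\big)dX-2\int_0^L\mathbf{J}^{\mathsf{T}}\mathrm{ad}^{*}_{\boldsymbol{\eta}}\boldsymbol{\mathcal{M}}\mathbf{J}\,dX .
\end{equation*}
The first integral is of the form $\mathbf{N}^{\mathsf{T}}-\mathbf{N}$ with $\mathbf{N}=\int_0^L\mathbf{J}^{\mathsf{T}}\boldsymbol{\mathcal{M}}\dot{\mathbf{J}}\,dX$, so it is skew-symmetric outright. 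The whole statement therefore reduces to the gyroscopic integral. For the quadratic form, set $\mathbf{v}=\mathbf{J}\mathbf{s}$; then $\mathbf{s}^{\mathsf{T}}(\dot{\mathbf{M}}-2\mathbf{C})\mathbf{s}=2\int_0^L(\mathrm{ad}_{\boldsymbol{\eta}}\mathbf{v})^{\mathsf{T}}\boldsymbol{\mathcal{M}}\mathbf{v}\,dX$.

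The next step is to make the structure of $\dot{\mathbf{J}}$ explicit. Differentiating \eqref{eq:geometric_jacobian} with $\tfrac{d}{dt}\mathrm{Ad}_{\mathbf{g}^{-1}}=-\mathrm{ad}_{\boldsymbol{\eta}}\mathrm{Ad}_{\mathbf{g}^{-1}}$ gives
\begin{equation*}
\dot{\mathbf{J}}=-\mathrm{ad}_{\boldsymbol{\eta}}\mathbf{J}+\hat{\mathbf{J}},\qquad \hat{\mathbf{J}}=\mathrm{Ad}_{\mathbf{g}^{-1}}\int_0^X\mathrm{Ad}_{\mathbf{g}}\,\mathrm{ad}_{\boldsymbol{\eta}(s)}\mathbf{B}_{\boldsymbol{\xi}}\,ds .
\end{equation*}
The aim is to use this decomposition to rewrite $\int_0^L\mathbf{J}^{\mathsf{T}}\mathrm{ad}^{*}_{\boldsymbol{\eta}}\boldsymbol{\mathcal{M}}\mathbf{J}\,dX$ in terms of $\mathbf{J}$ and $\hat{\mathbf{J}}$. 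The hope is that the $-\mathrm{ad}_{\boldsymbol{\eta}}\mathbf{J}$ part of $\dot{\mathbf{J}}$, which enters both $\dot{\mathbf{M}}$ and $\mathbf{C}$, recombines with the gyroscopic integral into a term of the form $\mathbf{N}'^{\mathsf{T}}-\mathbf{N}'$.

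The main obstacle is exactly this gyroscopic integral, and I do not currently see how to close it. The symmetric part of $\mathrm{ad}^{*}_{\boldsymbol{\eta}}\boldsymbol{\mathcal{M}}$ is $-\tfrac12(\mathrm{ad}_{\boldsymbol{\eta}}^{\mathsf{T}}\boldsymbol{\mathcal{M}}+\boldsymbol{\mathcal{M}}\mathrm{ad}_{\boldsymbol{\eta}})$, and this is not zero for a general $\boldsymbol{\eta}$. Moreover, $(\mathrm{ad}_{\boldsymbol{\eta}}\mathbf{v})^{\mathsf{T}}\boldsymbol{\mathcal{M}}\mathbf{v}$ vanishes when $\mathbf{s}=\dot{\mathbf{q}}$, since then $\mathbf{v}=\boldsymbol{\eta}$ and $\mathrm{ad}_{\boldsymbol{\eta}}\boldsymbol{\eta}=\mathbf{0}$, but not for arbitrary $\mathbf{s}$. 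So the cancellation must come from the interaction with $\hat{\mathbf{J}}$, and a pointwise argument in $X$ will not do.

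The first thing I would try is to integrate the $\hat{\mathbf{J}}$ terms by parts in $X$ after swapping the order of integration, using the recursive (integral-in-$X$) structure of $\mathbf{J}$. If even that leaves a nonzero symmetric remainder, the conclusion would be that the claimed identity does not hold for the $\mathbf{C}$ of \eqref{eq:generalized_dynamics} for arbitrary $\mathbf{s}$. What can always be proved is the weaker $\dot{\mathbf{q}}^{\mathsf{T}}(\dot{\mathbf{M}}-2\mathbf{C})\dot{\mathbf{q}}=0$, which is the power-balance identity. Checking whether $\mathbf{s}^{\mathsf{T}}(\dot{\mathbf{M}}-2\mathbf{C})\mathbf{s}=0$ for all $\mathbf{s}$ is the step I would carry out first.
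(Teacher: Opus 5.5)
\textbf{Your reduction is correct, but the statement itself is false for the paper's $\mathbf{C}$, so no completion of your plan can prove it.}

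\textbf{Why the $\hat{\mathbf{J}}$ route is a dead end.} You expect the cancellation to ``come from the interaction with $\hat{\mathbf{J}}$''. That cannot happen, because $\hat{\mathbf{J}}$ has already dropped out. Since $\mathbf{s}^{\mathsf{T}}(\mathbf{N}^{\mathsf{T}}-\mathbf{N})\mathbf{s}=0$ for every $\mathbf{N}$, your formula $\mathbf{s}^{\mathsf{T}}(\dot{\mathbf{M}}-2\mathbf{C})\mathbf{s}=2\int_0^L(\mathrm{ad}_{\boldsymbol{\eta}}\mathbf{v})^{\mathsf{T}}\boldsymbol{\mathcal{M}}\mathbf{v}\,dX$ is exact and contains no $\dot{\mathbf{J}}$ at all. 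Equivalently, inserting $\dot{\mathbf{J}}=-\mathrm{ad}_{\boldsymbol{\eta}}\mathbf{J}+\hat{\mathbf{J}}$ splits $\dot{\mathbf{M}}-2\mathbf{C}$ exactly into a skew part and a symmetric part:
\begin{equation*}
\dot{\mathbf{M}}-2\mathbf{C}=\int_0^L\big(\hat{\mathbf{J}}^{\mathsf{T}}\boldsymbol{\mathcal{M}}\mathbf{J}-\mathbf{J}^{\mathsf{T}}\boldsymbol{\mathcal{M}}\hat{\mathbf{J}}\big)\,dX+\int_0^L\mathbf{J}^{\mathsf{T}}\big(\mathrm{ad}_{\boldsymbol{\eta}}^{\mathsf{T}}\boldsymbol{\mathcal{M}}+\boldsymbol{\mathcal{M}}\mathrm{ad}_{\boldsymbol{\eta}}\big)\mathbf{J}\,dX .
\end{equation*}
Integration by parts therefore has nothing to act on. The lemma holds if and only if the second integral vanishes, and it does not.

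\textbf{Counterexample.} Take one rod with $\boldsymbol{\xi}=(0,0,q_1,1+q_2,0,0)^{\mathsf{T}}$, i.e.\ uniform curvature about the body $z$-axis plus uniform stretch, and evaluate at $\mathbf{q}=\mathbf{0}$. Then \eqref{eq:geometric_jacobian} gives $\mathbf{J}(X)\mathbf{u}=(0,0,Xu_1,Xu_2,\tfrac12X^2u_1,0)^{\mathsf{T}}$. Write $\boldsymbol{\eta}=(\boldsymbol{\omega},\boldsymbol{\nu})$ and $\mathbf{v}=(\mathbf{a},\mathbf{b})$ in angular/linear parts. Then $\boldsymbol{\omega}\parallel\mathbf{a}$ kills the rotational term, and $(\boldsymbol{\omega}\times\mathbf{b})\cdot\mathbf{b}=0$. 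Only $\rho A(\boldsymbol{\nu}\times\mathbf{a})\cdot\mathbf{b}=\tfrac12\rho AX^4s_1(\dot q_1s_2-\dot q_2s_1)$ survives, so
\begin{equation*}
\mathbf{s}^{\mathsf{T}}(\dot{\mathbf{M}}-2\mathbf{C})\mathbf{s}=\frac{\rho AL^{5}}{5}\,s_1\,(\dot q_1s_2-\dot q_2s_1).
\end{equation*}
This equals $\rho AL^5/5\neq0$ for $\dot{\mathbf{q}}=(1,0)^{\mathsf{T}}$, $\mathbf{s}=(1,1)^{\mathsf{T}}$, and vanishes for $\mathbf{s}=\dot{\mathbf{q}}$, as it must. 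Bending-only bases, as used in the paper's simulations, fail the same way at curved configurations, where the body velocity acquires a tangential component.

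\textbf{What the paper does and does not establish.} The paper gives only a one-line appeal to the kinetic-energy structure and coadjoint identities. That structure yields the power identity $\dot{\mathbf{q}}^{\mathsf{T}}(\dot{\mathbf{M}}-2\mathbf{C})\dot{\mathbf{q}}=0$ that you mention. It also guarantees that \emph{some} skew-compatible factorization exists (e.g., via Christoffel symbols of $\mathbf{M}$). It does not give skew-symmetry of this particular $\mathbf{C}$, whose gyroscopic block has a nonzero symmetric part. The weaker identity is not enough for Theorem~\ref{thm:proj_adapt_stab}, which applies the lemma with $\mathbf{s}=\dot{\mathbf{q}}-\dot{\mathbf{q}}_r$.

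\textbf{What you should write instead.} Give the counterexample together with a repair. With your convention $\mathrm{ad}^{*}_{\boldsymbol{\eta}}=-\mathrm{ad}_{\boldsymbol{\eta}}^{\mathsf{T}}$, set
\begin{equation*}
\bar{\mathbf{C}}=\int_0^L\mathbf{J}^{\mathsf{T}}\big(\boldsymbol{\mathcal{M}}\dot{\mathbf{J}}+(\mathrm{ad}^{*}_{\boldsymbol{\eta}}\boldsymbol{\mathcal{M}}+\boldsymbol{\mathcal{M}}\mathrm{ad}_{\boldsymbol{\eta}})\mathbf{J}\big)\,dX .
\end{equation*}
Since $\mathrm{ad}_{\boldsymbol{\eta}}\boldsymbol{\eta}=\mathbf{0}$, you get $\bar{\mathbf{C}}\dot{\mathbf{q}}=\mathbf{C}\dot{\mathbf{q}}$, so the equations of motion are unchanged. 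The added term turns the symmetric integral above into $\int_0^L\mathbf{J}^{\mathsf{T}}(\mathrm{ad}_{\boldsymbol{\eta}}^{\mathsf{T}}\boldsymbol{\mathcal{M}}-\boldsymbol{\mathcal{M}}\mathrm{ad}_{\boldsymbol{\eta}})\mathbf{J}\,dX$, which is skew, so $\dot{\mathbf{M}}-2\bar{\mathbf{C}}$ is skew-symmetric. $\bar{\mathbf{C}}$ is still linear in $\boldsymbol{\theta}$. The Lyapunov argument therefore goes through once the Coriolis block of $\mathbf{Y}$ is built from $\bar{\mathbf{C}}(\mathbf{q},\dot{\mathbf{q}})\dot{\mathbf{q}}_r$.
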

This follows from the kinetic-energy structure of the GVS model and the
coadjoint operator identities underlying
{\small \eqref{eq:generalized_dynamics}}.

\begin{theorem}
\label{thm:proj_adapt_stab}
Consider the control law {\small \eqref{eq:u_adapt}} with the
adaptation law
{\small
\begin{equation}
\dot{\hat{\boldsymbol{\theta}}}
=
-\mathbf{\Gamma}\mathbf{Y}^{\mathsf{T}}(\cdot)\mathbf{P}\mathbf{s},
\qquad
\mathbf{\Gamma}\succ\mathbf{0}.
\label{eq:theta_dot}
\end{equation}
}
Then all closed-loop signals remain bounded and
{\small $\mathbf{s}(t)\to\mathbf{0}$} as
{\small $t\to\infty$}.
\end{theorem}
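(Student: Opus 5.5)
The plan is the standard Slotine--Li argument, carried out on the projected error system \eqref{eq:s_dyn_closed}. The candidate Lyapunov function is
\begin{equation*}
V=\tfrac12\mathbf{s}^{\mathsf T}\mathbf{M}\mathbf{s}+\tfrac12\tilde{\boldsymbol{\theta}}^{\mathsf T}\mathbf{\Gamma}^{-1}\tilde{\boldsymbol{\theta}},
\end{equation*}
which is positive definite in $(\mathbf{s},\tilde{\boldsymbol{\theta}})$ because $\mathbf{M}=\int_0^L\mathbf{J}^{\mathsf T}\boldsymbol{\mathcal M}\mathbf{J}\,dX\succ\mathbf{0}$ and $\mathbf{\Gamma}\succ\mathbf{0}$. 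Differentiating gives
\begin{equation*}
\dot V=\mathbf{s}^{\mathsf T}\mathbf{M}\dot{\mathbf{s}}+\tfrac12\mathbf{s}^{\mathsf T}\dot{\mathbf{M}}\mathbf{s}+\tilde{\boldsymbol{\theta}}^{\mathsf T}\mathbf{\Gamma}^{-1}\dot{\hat{\boldsymbol{\theta}}},
\end{equation*}
using $\dot{\tilde{\boldsymbol{\theta}}}=\dot{\hat{\boldsymbol{\theta}}}$ since $\boldsymbol{\theta}$ is constant.

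The key step is to insert the projector. By \eqref{eq:Ps_equals_s} and the symmetry of $\mathbf{P}$ from Lemma~\ref{lem:P_orth}, $\mathbf{s}^{\mathsf T}=\mathbf{s}^{\mathsf T}\mathbf{P}$. Hence
\begin{equation*}
\mathbf{s}^{\mathsf T}\mathbf{M}\dot{\mathbf{s}}=\mathbf{s}^{\mathsf T}\mathbf{P}(\mathbf{M}\dot{\mathbf{s}}+\mathbf{C}\mathbf{s})-\mathbf{s}^{\mathsf T}\mathbf{C}\mathbf{s}.
\end{equation*}
Substituting \eqref{eq:s_dyn_closed} turns the first term into $-\mathbf{s}^{\mathsf T}\mathbf{K}_s\mathbf{s}+\mathbf{s}^{\mathsf T}\mathbf{P}\mathbf{Y}\tilde{\boldsymbol{\theta}}$. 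The remaining terms combine to $\tfrac12\mathbf{s}^{\mathsf T}(\dot{\mathbf{M}}-2\mathbf{C})\mathbf{s}$, which vanishes by Lemma~\ref{lem:skew}. The adaptation law \eqref{eq:theta_dot} gives $\tilde{\boldsymbol{\theta}}^{\mathsf T}\mathbf{\Gamma}^{-1}\dot{\hat{\boldsymbol{\theta}}}=-\tilde{\boldsymbol{\theta}}^{\mathsf T}\mathbf{Y}^{\mathsf T}\mathbf{P}\mathbf{s}$, which cancels the cross term exactly. Therefore $\dot V=-\mathbf{s}^{\mathsf T}\mathbf{K}_s\mathbf{s}\le 0$.

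Boundedness follows next. Since $V$ is nonincreasing, $\mathbf{s}$ and $\tilde{\boldsymbol{\theta}}$, and hence $\hat{\boldsymbol{\theta}}$, are bounded; this uses a uniform lower bound $\mathbf{M}\succeq m\mathbf{I}$, which I would assume holds along the trajectory. Integrating $\dot V$ yields $\mathbf{s}\in\mathcal L_2$. Boundedness of $\mathbf{s}$ transfers to the task space as follows. With $\mathbf{J}$ full rank, $\mathbf{s}$ acts as an input to the stable filter $\dot{\mathbf{e}}+\mathbf{\Lambda}\mathbf{e}=-\mathbf{J}\mathbf{s}$. For bounded desired trajectories this makes $\mathbf{e}$ and $\dot{\mathbf{e}}$ bounded, and therefore $\dot{\mathbf{q}}_r$, $\ddot{\mathbf{q}}_r$, $\dot{\mathbf{q}}$, the regressor $\mathbf{Y}$, and $\boldsymbol{\tau}_p$ are bounded as well.

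To conclude $\mathbf{s}\to\mathbf{0}$ I would apply Barbalat's lemma. This requires $\dot{\mathbf{s}}$ bounded, so that $\mathbf{s}$ is uniformly continuous; together with $\mathbf{s}\in\mathcal L_2\cap\mathcal L_\infty$ this gives the result. Equivalently, one can show $\ddot V=-2\mathbf{s}^{\mathsf T}\mathbf{K}_s\dot{\mathbf{s}}$ is bounded.

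I expect this step to be the main obstacle. Equation \eqref{eq:s_dyn_closed} determines only $\mathbf{P}\mathbf{M}\dot{\mathbf{s}}$, not $\dot{\mathbf{s}}$ itself. I would recover the missing component from the constraint. Differentiating $\mathbf{A}\mathbf{s}=\mathbf{0}$ gives $\mathbf{A}\dot{\mathbf{s}}=-\dot{\mathbf{A}}\mathbf{s}$, which fixes the component of $\dot{\mathbf{s}}$ in the range of $\mathbf{A}^{\mathsf T}$. Combining this with the projected equation, the stacked operator $\begin{bmatrix}\mathbf{P}\mathbf{M}\\ \mathbf{A}\end{bmatrix}$ has full column rank, since $\mathbf{M}\succ\mathbf{0}$ and $\mathbf{A}$ has full row rank (Aghili's constrained-inertia argument). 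Inverting it expresses $\dot{\mathbf{s}}$ in terms of bounded quantities, provided $\mathbf{M}$, $\mathbf{C}$, $\mathbf{A}$, $\dot{\mathbf{A}}$ and $\mathbf{A}^{+}$ stay bounded on the bounded configuration set. I would state that uniform rank and boundedness requirement explicitly as a standing assumption, alongside full rank of $\mathbf{J}$.
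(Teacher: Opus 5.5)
Your argument is the paper's proof: the same $V$, the same substitution $\mathbf{s}^{\mathsf T}=\mathbf{s}^{\mathsf T}\mathbf{P}$ into \eqref{eq:s_dyn_closed}, Lemma~\ref{lem:skew} together with the adaptation law to get $\dot V=-\mathbf{s}^{\mathsf T}\mathbf{K}_s\mathbf{s}$, and then $\mathbf{s}\in\mathcal L_2\cap\mathcal L_\infty$ plus Barbalat. The one difference is that the paper merely asserts $\dot{\mathbf{s}}$ is bounded, whereas you justify it through $\mathbf{A}\dot{\mathbf{s}}=-\dot{\mathbf{A}}\mathbf{s}$ and the full-column-rank operator $\bigl[\begin{smallmatrix}\mathbf{P}\mathbf{M}\\ \mathbf{A}\end{smallmatrix}\bigr]$, which is a genuine improvement. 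One caveat on your extra boundedness step, inherited from the paper's own remark after \eqref{eq:proj_ref_linear}: with $\dot{\mathbf{q}}_r=\mathbf{P}\mathbf{J}^{\dagger}\dot{\mathbf{x}}_r$ one gets $\mathbf{J}\mathbf{s}=\dot{\mathbf{x}}-\mathbf{J}\mathbf{P}\mathbf{J}^{\dagger}\dot{\mathbf{x}}_r$, so the filter $\dot{\mathbf{e}}+\mathbf{\Lambda}\mathbf{e}=-\mathbf{J}\mathbf{s}$ needs $\mathbf{J}\mathbf{P}\mathbf{J}^{\dagger}=\mathbf{I}$, not just full rank of $\mathbf{J}$ (for instance, generate $\dot{\mathbf{q}}_r=(\mathbf{J}\mathbf{P})^{\dagger}\dot{\mathbf{x}}_r$ with $\mathbf{J}\mathbf{P}$ of full row rank), and this belongs among your standing assumptions.
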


\begin{proof}
Consider the Lyapunov function
{\small
\begin{equation}
V
=
\frac12 \mathbf{s}^{\mathsf{T}}\mathbf{M}\mathbf{s}
+
\frac12
\tilde{\boldsymbol{\theta}}^{\mathsf{T}}
\mathbf{\Gamma}^{-1}
\tilde{\boldsymbol{\theta}},
\label{eq.Lyapunov function}
\end{equation}
}

\noindent which is positive definite since
{\small $\mathbf{M}\succ0$} and
{\small $\mathbf{\Gamma}\succ0$}.
Its derivative is
{\small
\begin{equation}
\dot V
=
\frac12\mathbf{s}^{\mathsf{T}}\dot{\mathbf{M}}\mathbf{s}
+
\mathbf{s}^{\mathsf{T}}\mathbf{M}\dot{\mathbf{s}}
+
\tilde{\boldsymbol{\theta}}^{\mathsf{T}}
\mathbf{\Gamma}^{-1}
\dot{\hat{\boldsymbol{\theta}}},
\end{equation}
}
\noindent where the true parameters are constant.
Using {\small \eqref{eq:s_dyn_closed}} and
{\small $\mathbf{P}\mathbf{s}=\mathbf{s}$},
{\small
$
\mathbf{s}^{\mathsf{T}}\mathbf{M}\dot{\mathbf{s}}
=
-\mathbf{s}^{\mathsf{T}}\mathbf{K}_s\mathbf{s}
+
\mathbf{s}^{\mathsf{T}}\mathbf{P}\mathbf{Y}(\cdot)
\tilde{\boldsymbol{\theta}}
-
\mathbf{s}^{\mathsf{T}}\mathbf{C}\mathbf{s}.
$}
\noindent Invoking Lemma~{\small \ref{lem:skew}},
{\small
$
\dot V
=
-\mathbf{s}^{\mathsf{T}}\mathbf{K}_s\mathbf{s}
+
\mathbf{s}^{\mathsf{T}}\mathbf{P}\mathbf{Y}(\cdot)
\tilde{\boldsymbol{\theta}}
+
\tilde{\boldsymbol{\theta}}^{\mathsf{T}}
\mathbf{\Gamma}^{-1}
\dot{\hat{\boldsymbol{\theta}}}.
$}

\noindent With the adaptation law
{\small
\begin{equation}
\dot{\hat{\boldsymbol{\theta}}}
=
-\mathbf{\Gamma}\mathbf{Y}^{\mathsf{T}}(\cdot)\mathbf{P}\mathbf{s},
\end{equation}
}

\noindent the cross terms cancel, yielding
{\small
\begin{equation}
\dot V
=
-\mathbf{s}^{\mathsf{T}}\mathbf{K}_s\mathbf{s}
\le 0.
\end{equation}
}

Thus {\small $\dot V\le 0$}, implying that {\small $V(t)$} is nonincreasing and bounded for all {\small $t\ge 0$}. Since {\small $V$} is positive definite in {\small $\mathbf{s}$} and {\small $\tilde{\boldsymbol{\theta}}$}, both signals remain bounded. Moreover, from {\small $\dot V=-\mathbf{s}^{\mathsf{T}}\mathbf{K}_s\mathbf{s}$} with {\small $\mathbf{K}_s\succ0$}, it follows that {\small $\mathbf{s}\in\mathcal{L}_2$}, where $\mathcal{L}_2$ denotes the set of square-integrable signals, corresponding to signals with finite energy over time, satisfying $\int_0^\infty \|\mathbf{s}(t)\|^2\,dt < \infty$. Because the closed-loop dynamics ensure that {\small $\dot{\mathbf{s}}$} is bounded, Barbalat’s lemma implies that {\small $\mathbf{s}(t)\to\mathbf{0}$} as {\small $t\to\infty$}.

\end{proof}

\vspace{-1.5 em}
\section{Simulation Results}
\label{sec:sim_results}

For the simulation studies, two tendon-driven CRs are rigidly connected through a flexible object, forming a closed-chain system. The structural parameters are summarized in Table~\ref{tab:structural_parameters}.
The reported density of the CRs corresponds to the backbone material only. To ensure consistency between the simulation parameters and the physical system, an effective distributed density {\small $\rho_{\mathrm{eff}}$} is introduced to capture the combined mass of the backbone, tendons, and spacer disks. This is computed as {\small $\rho_{\mathrm{eff}} = m_{\mathrm{total}}/(A L)$}, yielding {\small $\rho_{\mathrm{eff}} = 39317~\mathrm{kg/m^3}$}.

\begin{table}
\centering
\caption{Physical Parameters of the CCR}
\label{tab:structural_parameters}
\setlength{\tabcolsep}{3pt}
\renewcommand{\arraystretch}{1.0}
\begin{tabular}{lll}
\toprule
\textbf{Component} & \textbf{Parameter} & \textbf{Value} \\
\midrule
\multirow{5}{*}{\shortstack{Continuum Arms\\(Spring steel)}}
& Length & $L = 0.4~\mathrm{m}$ \\
& Density & $\rho= 7800~\mathrm{kg/m^3}$ \\
& Young's Modulus & $E = 120~\mathrm{GPa}$ \\
& Poisson's Ratio & $\nu = 0.3$ \\
& Radius & $r= 0.9~\mathrm{mm}$ \\
\midrule
\multirow{5}{*}{\shortstack{Flexible Object\\(Nitinol)}}
& Length & $L = 0.22~\mathrm{m}$ \\
& Density & $\rho = 6450~\mathrm{kg/m^3}$ \\
& Young's Modulus & $E = 50~\mathrm{GPa}$ \\
& Poisson's Ratio & $\nu = 0.3$ \\
& Radius & $r = 0.8~\mathrm{mm}$ \\
\bottomrule
\end{tabular}
\end{table}

For numerical implementation, each CR is modeled as a single soft link with one division. Linear bending in two orthogonal directions is considered for each robot, resulting in {\small $4$} generalized coordinates per CR. The flexible object is described by {\small $2$} deformation coordinates associated with bending in two orthogonal directions, together with {\small $6$} spatial motion of it. Therefore, the CCR is characterized by {\small $n=16$} generalized coordinates. Two closed-loop
joints are defined at the connection points between the distal
tips of the continuum arms and the two ends of the flexible
object.
Numerical integration of the distributed dynamics for each CR and the flexible object is performed using Gaussian quadrature with {\small $7$} integration points per component.

Based on the above model, the proposed projected adaptive controller is evaluated in simulation under two scenarios: (i) task-space regulation; and (ii) task-space trajectory tracking. Since the two CRs and the flexible object form a mechanically coupled closed-chain, the CCR system is modeled as a single constrained dynamical system. Accordingly, the task variable {\small $\mathbf{x}$} may be defined at any point of interest along the closed-chain. In this work, the task-space output is chosen as the three-dimensional position of the midpoint of the manipulated object, denoted by {\small $\mathbf{x} = [x; y; z]^{\mathsf{T}}$}.

For all simulation studies, we report the time histories of the midpoint coordinates {\small $(x,y,z)$} relative to their desired values {\small $(x_d,y_d,z_d)$}, together with the Euclidean task-space error norm {\small $|\mathbf{e}|$}, where {\small $\mathbf{e}=\mathbf{x}_d-\mathbf{x}$}. We additionally show the corresponding tendon actuation inputs and the evolution of the online mass and inertial parameter estimates.

To assess robustness against modeling errors, all simulations are conducted in the presence of parametric uncertainty introduced in the dynamic model. Specifically, the true screw inertia matrix of each CR and the flexible object is scaled as
{\small
$
\boldsymbol{\mathcal{M}} = 0.7\,\boldsymbol{\mathcal{M}}_{\mathrm{nom}},
$}
representing a {\small$30\%$} deviation from the nominal inertia matrix used in the controller design. The uncertainty is embedded directly in the dynamic model, while the controller is initialized with nominal parameter values.

\vspace{-1 em}
\subsection{Task-Space Regulation}
The regulation performance of the manipulated object midpoint is shown in Fig.{\small ~\ref{fig:Sim_regulation_performance}}. 
The task-space coordinates are regulated toward the constant reference position 
{\small $\mathbf{x}_d = [315.6,\; 18.2,\; 331.6]^{\mathsf{T}}$} mm. As shown in Fig.~{\small \ref{fig:Sim_regulation_performance}}, all task-space coordinates converge to their desired values, and the tracking error decreases monotonically toward zero, confirming the effectiveness of the proposed projected adaptive controller.

The corresponding actuation inputs and parameter estimates are shown in
Fig.{\small ~\ref{fig:Sim_effort_adaptation_regulation}}. In the simulation study, we
report both the tendon tensions {\small $(u_1,u_2,u_3)$} and the corresponding tendon
displacements {\small $(S_1,S_2,S_3)$} as actuation signals. The reason for including both variables is to clarify their different
roles in modeling and implementation. The dynamic model of the CCR is formulated in terms of tendon tensions,
since the actuation enters Cosserat rod models and their discretized
representations, such as the GVS formulation, as generalized forces.
It has been observed in force-driven continuum systems derived from
Cosserat rod formulations~{\small\cite{DANESH2025105953}} that tendon tensions
may reach relatively high values during the initial stage of control,
therefore appropriate bounds on the tendon tensions must always be
considered. In the present study, the tendon tension is bounded within
{\small $+/-\,20~\mathrm{N}$}, and as shown in Fig.{\small ~\ref{fig:Sim_effort_adaptation_regulation}(a)},
the tension reaches this limit during the initial transient.

In the experimental setup, however, actuation through direct tension
control is less reliable due to motor and hardware limitations. For this reason, we use the dynamic
formulation to compute the required tendon lengths and then determine
the corresponding tendon displacements at each time step. These tendon displacements are
mapped to motor position commands and applied to the experimental
platform. Although the simulated tendon tensions may reach their prescribed
bounds during the initial transient, the corresponding tendon
displacements start from zero and evolve smoothly in positive or
negative directions, as illustrated in Fig.{\small ~\ref{fig:Sim_effort_adaptation_regulation}(b)}.
This confirms that tendon displacement is a more practical
and reliable actuation variable for experimental implementation.
Accordingly, both tendon tensions and tendon displacements are shown in
the simulation results to illustrate this behavior, while tendon
displacement is used to actuate the motors in the experimental study.

Furthermore, the online estimates of the mass and inertia parameters for
the two CRs and the manipulated flexible object evolve smoothly,
demonstrating adaptation to parametric uncertainty while preserving
closed-loop stability through the projected adaptive update law. In this
study, only the {\small $I_x$} parameter is shown in the figures, since the
cross sections are symmetric and satisfy {\small $I_x = 2 I_y = 2 I_z$}, and the
remaining inertia components are directly related.

\begin{figure}
  \centering
  \includegraphics[width=0.48\textwidth]{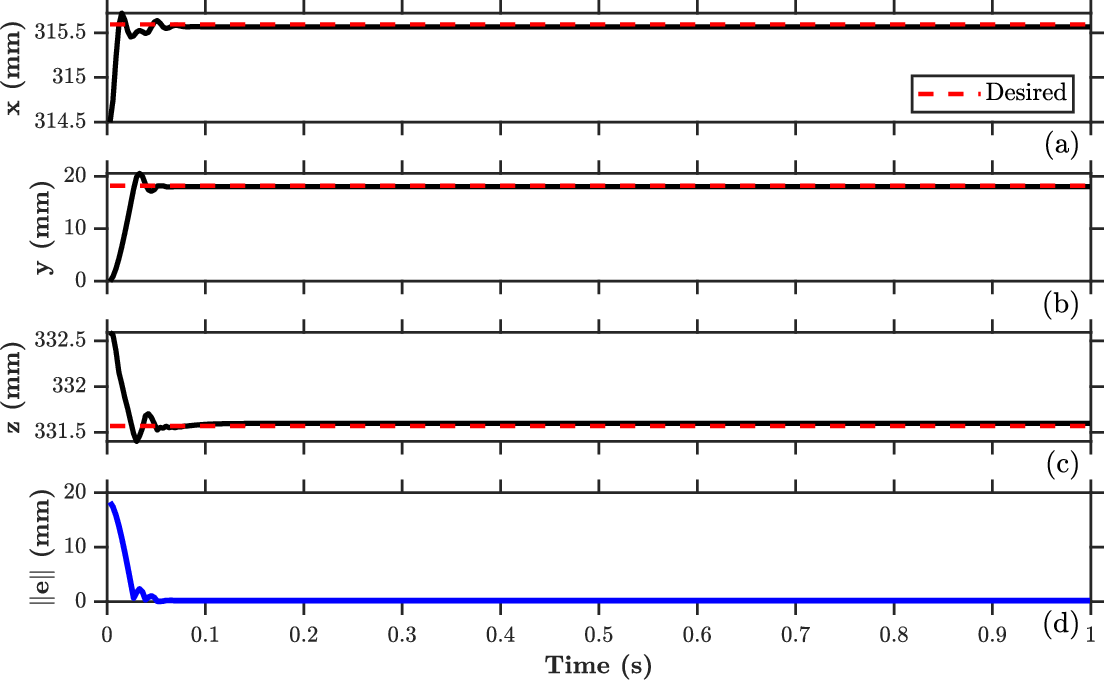}
\caption{Task-space regulation of the manipulated object midpoint in simulation. 
(a)--(c) $x$-, $y$-, and $z$-coordinates of the midpoint versus desired values, 
(d) Euclidean norm of the regulation error $\|\mathbf{e}\|$.}

  \label{fig:Sim_regulation_performance}
\end{figure}

\begin{figure}
  \centering
  \includegraphics[width=0.48\textwidth]{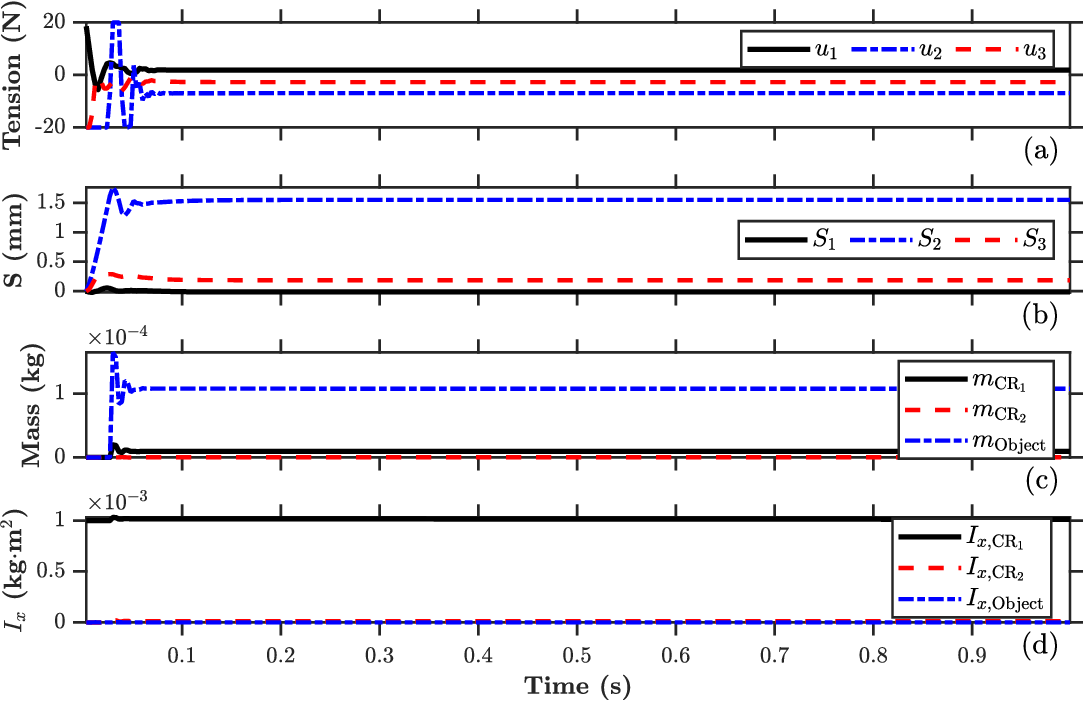}
\caption{Control inputs and adaptive parameter estimates during task-space regulation in simulation. 
(a) tendon tensions $u_1$, $u_2$, and $u_3$ (with saturation bounds of $[-20,\,20]~\mathrm{N}$). 
(b) tendon displacements $S_1$, $S_2$, and $S_3$. 
(c) estimated mass parameters of the two CRs and the manipulated flexible object. 
(d) estimated inertia parameter $I_x$.}
  \label{fig:Sim_effort_adaptation_regulation}
\end{figure}

\vspace{-1 em}
\subsection{Task-Space Trajectory Tracking}
Fig.{\small ~\ref{fig:Sim_tracking_performance}} presents the tracking performance of the
object midpoint for a time varying desired trajectory. The desired trajectory is defined as
{\small
$
i_d = \sum_{k=1}^{2} a_{ik}\sin(k t + \phi_{ik}) + c_i,
\ i \in \{x,z\},
y_d = a_{y}\sin(t + \phi_{y}) + c_y.$
}

\noindent where the constants are
{\footnotesize
$a_{x1}=0.000107, a_{x2}=0.00008, c_x=0.3146,
a_{y}=-0.01554,
c_y=-0.005173, 
a_{z1}=-0.000344, a_{z2}=-0.000257, c_z=0.3323,
\phi_{x1}=1.517, \phi_{x2}=1.458,
\phi_{y}=1.534, \phi_{z1}=1.533, \phi_{z2}=1.492.
$
}
\noindent The midpoint position tracks the desired motion in the {\small $x$}, {\small $y$}, and {\small $z$}
directions. The evolution of
{\small $\|\mathbf{e}\|$} confirms accurate tracking, and the inset over the
interval {\small $0$-$0.2~\mathrm{s}$} shows that the error norm decreases from
approximately {\small $20~\mathrm{mm}$} to zero within a short time, after which it
approaches zero.

The associated control effort and adaptive estimates are shown in
Fig.{\small ~\ref{fig:Sim_effort_adaptation}}. At the
beginning of the control process, the tendon tensions reach their
prescribed bounds; however, as shown in Fig.{\small ~\ref{fig:Sim_effort_adaptation}(b)},
the corresponding tendon displacements start from zero and evolve
smoothly. Simultaneously, the parameter estimates of the mass and {\small $I_x$}
terms for CRs and the manipulated object
adapt online, compensating for uncertainties in the coupled dynamics.

\begin{figure}
  \centering
  \includegraphics[width=0.48\textwidth]{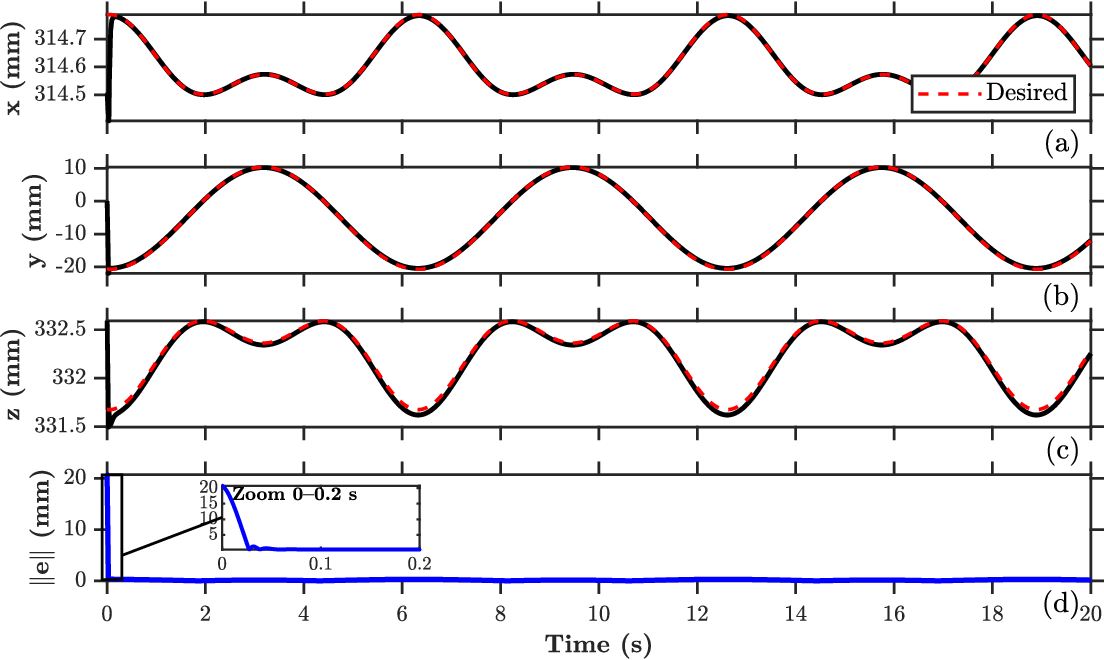}
  \caption{Task-space tracking performance of the manipulated object midpoint in simulation. 
(a)--(c) $x$-, $y$-, and $z$-coordinates of the midpoint, 
(d) norm of the tracking error $\|\mathbf{e}\|$. 
The inset in (d) shows a zoomed view over $0$--$0.2\,\mathrm{s}$.}
  \label{fig:Sim_tracking_performance}
\end{figure}

\begin{figure}
  \centering
  \includegraphics[width=0.48\textwidth]{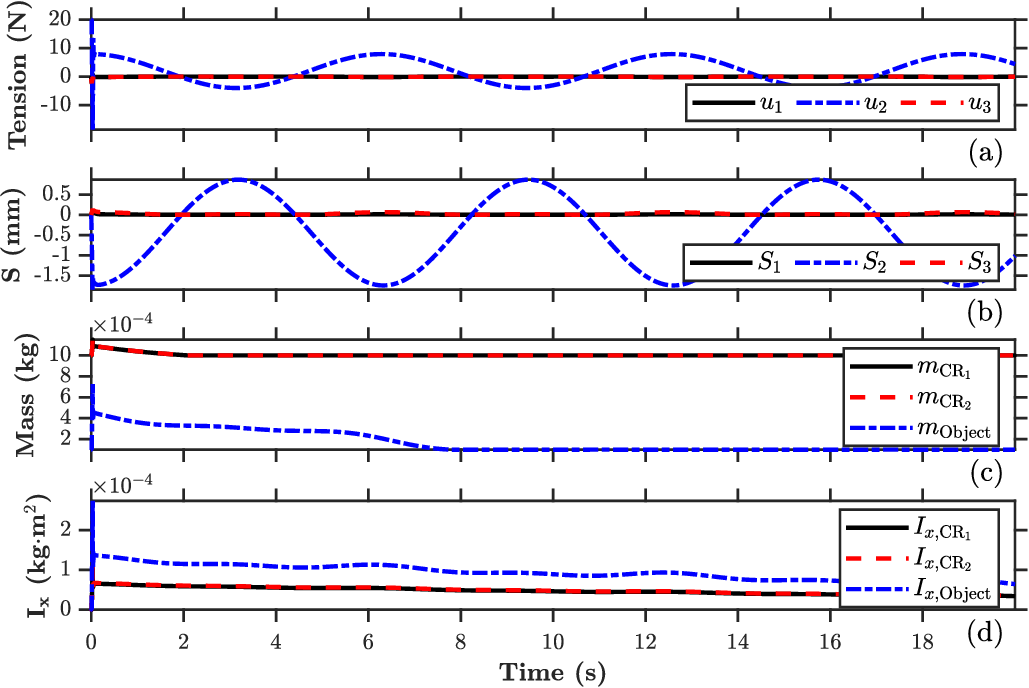}
\caption{Control inputs and adaptive parameter estimates during task-space tracking in simulation. 
(a) tendon tensions $u_1$, $u_2$, and $u_3$; 
(b) tendon displacements $S_1$, $S_2$, and $S_3$; 
(c) online estimates of the mass parameters; 
(d) online estimates of the inertia parameter $I_x$.}
  \label{fig:Sim_effort_adaptation}
\end{figure}

The controller gains and computational performance for both
regulation and tracking are summarized in
Table{\small ~\ref{tab:controller_params}}. The GVS dynamic model was first
implemented in \textsc{MATLAB} and subsequently converted to a MEX
function to improve computational efficiency. The time integration was
performed using a fixed-step explicit Euler method of the form
{\small $\mathbf{y}_{k+1} = \mathbf{y}_k + \Delta t\,\mathbf{f}(\mathbf{y}_k)$},
where the system derivatives were evaluated through the compiled MEX
routine at each time step. A fixed step size was selected to balance
model accuracy and real-time feasibility. Specifically, the sampling
interval {\small $\Delta t$} was chosen to ensure a fast response
while maintaining numerical stability and sufficient accuracy of the
GVS dynamics. Because the measured runtime remains below the integration interval,
the implementation operates in real time.

\begin{table}[t]
\centering
\caption{Simulation gains and runtime.}
\label{tab:controller_params}
\setlength{\tabcolsep}{2pt}
\renewcommand{\arraystretch}{0.85}
\footnotesize
\begin{tabular}{lcc}
\toprule
 & Regulation& Tracking \\
\midrule

$\boldsymbol{\Lambda}$ 
& $1.5{\times}10^{5}\mathrm{diag}(20,1,5)$ 
& $10^{5}\mathrm{diag}(1,1,1)$ \\

$\mathbf{K}_s$ 
& $0.07\mathbf{I}_{16}$
& $0.07\mathbf{I}_{16}$ \\

$\boldsymbol{\Gamma}$ 
& $10^{-4}\mathbf{I}_{12}$
& $10^{-4}\mathbf{I}_{12}$ \\

$\Delta t$ 
& $3.5{\times}10^{-4}$
& $4{\times}10^{-4}$ \\

Avg. run. 
& $0.325\,\mathrm{ms}\,(3.1\,\mathrm{kHz})$ 
& $0.312\,\mathrm{ms}\,(3.2\,\mathrm{kHz})$ \\

RTF
& $1.06$ 
& $1.28$ \\

\bottomrule
\end{tabular}
\end{table}

\vspace{-1.5 em}
\subsection{Comparison with Non-Adaptive and Baseline Controllers}

To provide a fair and meaningful assessment of the proposed projected adaptive controller, its performance is compared with two additional control strategies under identical simulation conditions.

First, a non-adaptive model-based controller is considered. In this case, the same control structure as in {\small \eqref{eq:u_adapt}} is retained, but the adaptation mechanism {\small \eqref{eq:theta_dot}} is disabled by fixing the parameter estimates to their nominal values, {\small $\hat{\boldsymbol{\theta}} = \boldsymbol{\theta}_{\mathrm{nom}}$}. The plant dynamics include a {\small $30\%$} parametric uncertainty in the inertia parameters, while the controller continues to use nominal values. This case evaluates the sensitivity of a model-based controller to structured modeling errors in the absence of online parameter adaptation.

As an additional benchmark, a baseline projected feedback controller is considered. This controller retains the same structure as the control input in {\small \eqref{eq:u_adapt}}, but without the adaptation term {\small $\mathbf{P}\mathbf{Y}(\cdot)\hat{\boldsymbol{\theta}}$}. In other words, the parameter estimation mechanism is disabled and no dynamic compensation is included. The controller therefore relies purely on the projected sliding feedback terms, and serves as a reference for evaluating the benefits of online parameter adaptation.

For all three controllers, the control gains {\small $(\boldsymbol{\Lambda}, \mathbf{K}_s, \boldsymbol{\Gamma})$} are kept identical and are not retuned or adjusted between simulations. 
The gain values are selected according to Table{\small ~\ref{tab:controller_params}}, as described previously. 
This ensures that any performance differences arise solely from the presence or absence of online parameter adaptation, rather than from gain reconfiguration. 
Maintaining identical gain values across all cases provides a consistent and unbiased comparison framework.

\subsubsection{Comparison of Regulation Performance}

A comparison of the regulation performance under three control strategies is presented in Fig.~\ref{fig:Sim_regulation_U_S_3method}. As shown in Fig.~\ref{fig:Sim_regulation_U_S_3method}(a), all controllers achieve regulation, but their transient and steady-state behaviors differ. The adaptive controller yields the smallest steady-state error, followed by the baseline controller, whereas the nominal controller exhibits the largest residual oscillations, as further highlighted by the zoomed view. The corresponding tendon tensions and tendon displacements are shown in Fig.~\ref{fig:Sim_regulation_U_S_3method}(b)--(g). The nominal controller exhibits pronounced oscillations in both the tendon tensions and tendon displacements, while the adaptive and baseline controllers produce smoother responses. This behavior indicates that online parameter adaptation effectively compensates for structured modeling uncertainty, reduces oscillatory corrective action, and improves regulation accuracy.

\begin{figure}
  \centering
  \includegraphics[width=0.48\textwidth]{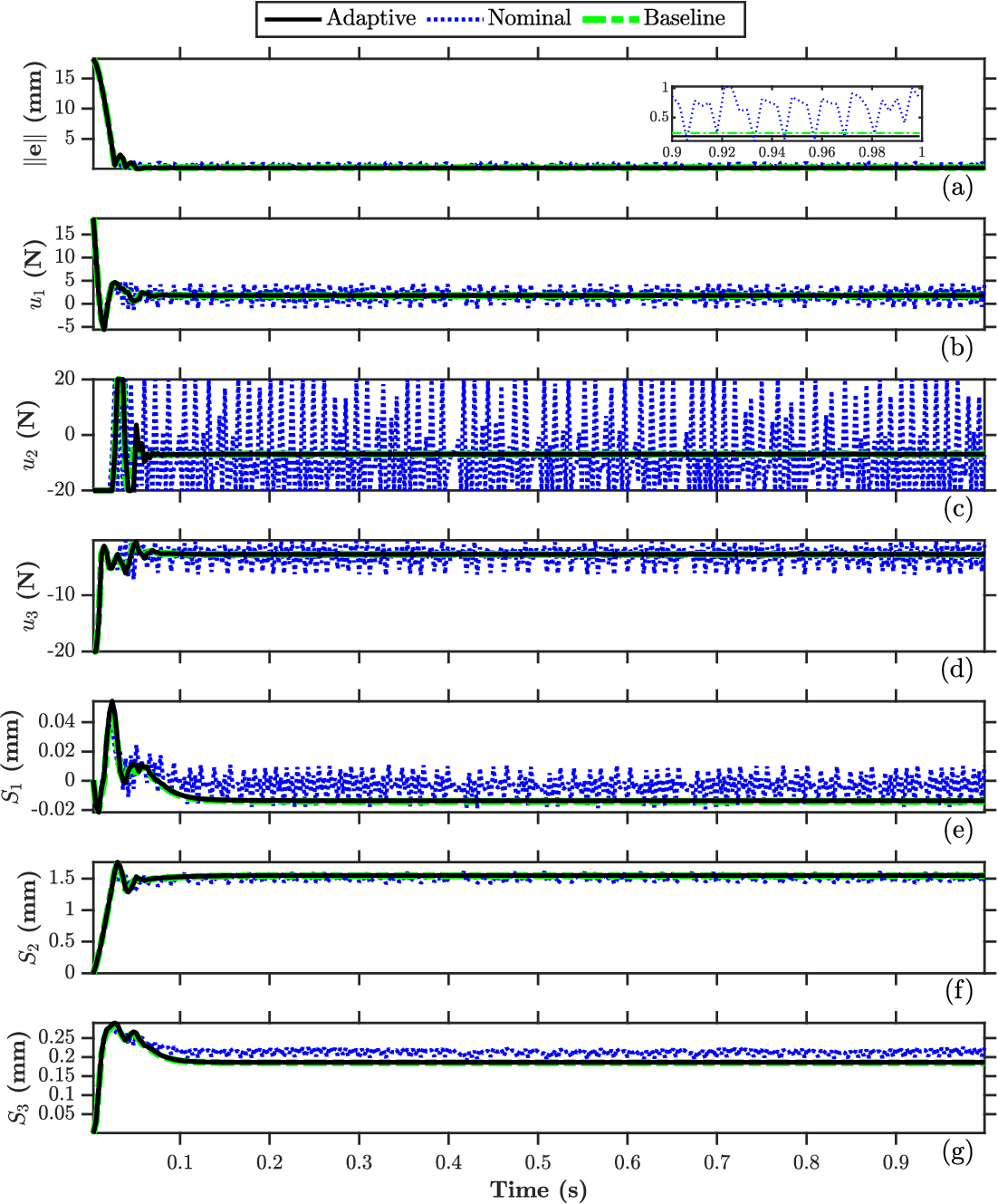}
\caption{Simulation results for regulation:
(a) error norm; 
(b)--(d) tendon tensions $u_1$--$u_3$; 
(e)--(g) tendon displacements $S_1$--$S_3$. 
Comparison of the adaptive, nominal, and baseline controllers.}

  \label{fig:Sim_regulation_U_S_3method}
\end{figure}

\subsubsection{Comparison of Tracking Performance}
The tracking performance under the three control strategies is illustrated in Fig.~\ref{fig:Sim_U_S_3method}. As shown in Fig.~\ref{fig:Sim_U_S_3method}(a), the adaptive controller follows the desired trajectory with high accuracy and smooth convergence, whereas the baseline and nominal controllers exhibit larger tracking fluctuations during motion. The zoomed view of the tracking error over the interval {\small $19$--$20$~s} further highlights these differences, showing that the adaptive controller achieves the lowest steady-state error, followed by the baseline controller, while the nominal controller exhibits the largest error. The corresponding tendon tensions and tendon displacements are shown in Fig.~\ref{fig:Sim_U_S_3method}(b)--(g). Unlike the regulation case, both the nominal and baseline controllers exhibit pronounced oscillatory behavior in the tendon tensions and tendon displacements during trajectory tracking. In contrast, the adaptive controller produces smoother actuation profiles without oscillatory behavior or repeated approaches to the saturation bounds. This demonstrates that online parameter adaptation improves dynamic compensation, reduces oscillatory corrective action, and enhances tracking performance.

\begin{figure}
  \centering
  \includegraphics[width=0.48\textwidth]{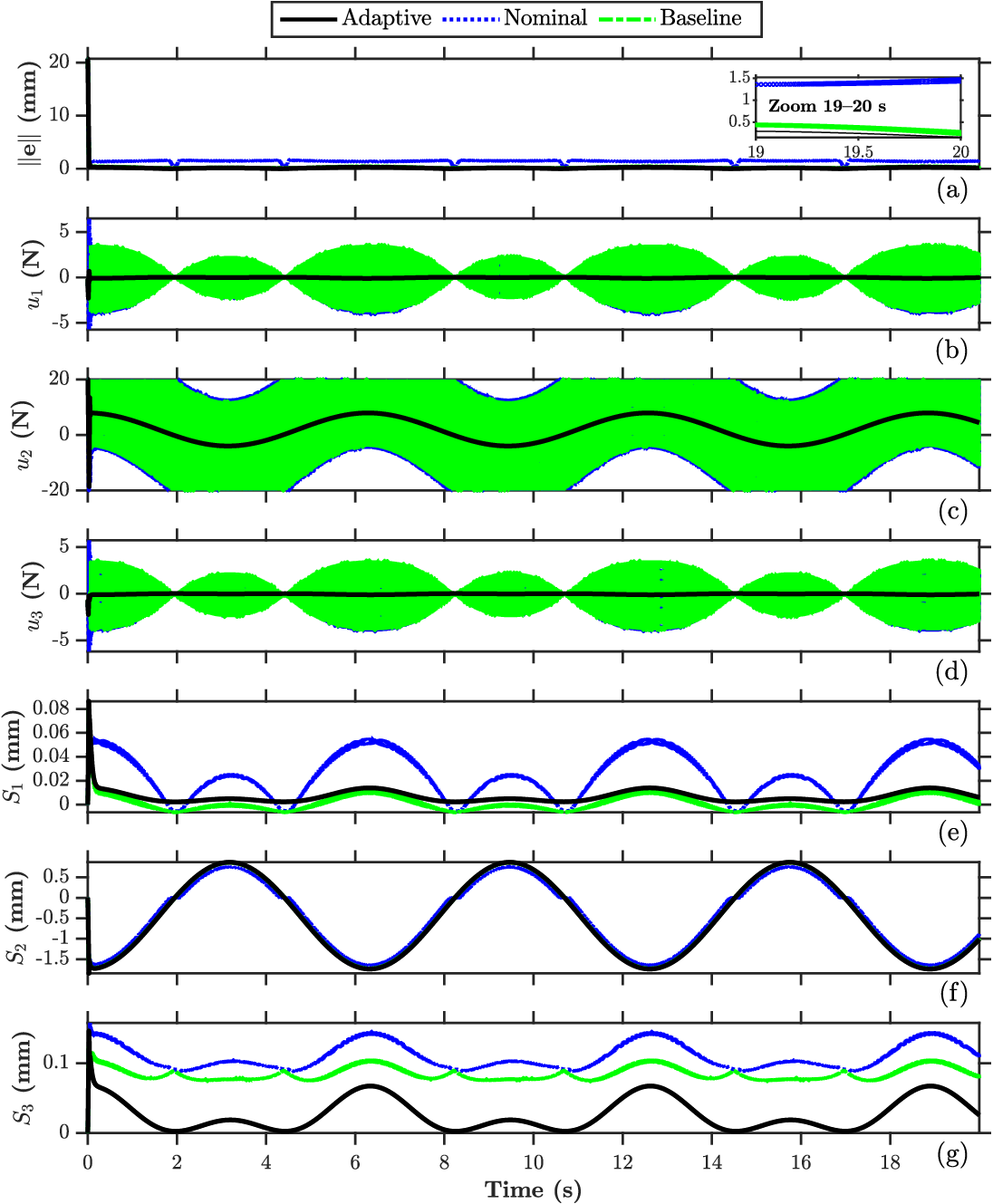}
  \caption{Simulation results for trajectory tracking. 
(a) tracking error norm; 
(b)--(d) tendon tensions $u_1$--$u_3$; 
(e)--(g) tendon displacements $S_1$--$S_3$. 
Comparison of the adaptive, nominal, and baseline controllers.}

  \label{fig:Sim_U_S_3method}
\end{figure}
\vspace{-0.5 em}
\section{Experiment Results}
\label{sec:exp_results}
This section presents the experimental setup and validation of the proposed projected adaptive control framework.
\vspace{-1 em}
\subsection{Experimental Setup}

The mechanical architecture of the two-arm tendon-driven CCR platform is shown in Fig.~\ref{fig:exp_setup}. Each continuum arm consists of a spring steel backbone with circular cross-section. Four Kevlar tendons are routed through uniformly spaced spacer disks with radial offset {\small $r_t=1.8~\mathrm{mm}$} and angular separation of {\small $90^\circ$}, forming two orthogonal tendon pairs that enable spatial bending. The two arms are mounted on rigid base plates separated by {\small $629~\mathrm{mm}$}. The manipulated flexible object is a Nitinol rod that rigidly attached to the distal ends of the two arms, forming a closed-chain configuration.
Each tendon is actuated by a Dynamixel AX-12A servo motor (Robotis, Seoul, Korea) 
through serial communication. The tendon displacement
computed by the dynamic model is mapped to motor angular
position using the known actuator shaft diameter. 
In both simulation and experimental studies, three independent tendon inputs are actively controlled, consistent with the task-space position regulation and trajectory tracking objectives. Specifically, {\small $u_1$} and {\small $u_2$} correspond to the two orthogonal tendon pairs of CR{\small1}, generating bending in the {\small $xz$}- and {\small $yz$}-planes, respectively (see Fig.~\ref{fig:exp_setup}). The third input, {\small $u_3$}, actuates the corresponding tendon pair of CR{\small2} in the {\small $xz$}-plane. 

Three-dimensional tip position measurements are obtained using a Tracker 3.0 Vicon motion capture system (Bilston, UK). Four reflective markers are attached to the distal tip of each arm (eight in total) for tip reconstruction, as shown in Fig.~\ref{fig:exp_setup}, while three additional markers mounted on the base plate define the global reference frame. The marker positions are processed in real time to reconstruct the tip locations, from which the midpoint of the manipulated object is computed and used as the feedback signal.

The measured midpoint position is directly fed into a compiled MEX function that encapsulates the complete system dynamics, including the CRs model, the closed-chain coupling terms, and the control law. This unified dynamic function is executed within a fixed-step explicit Euler integration loop. The MEX function then evaluates the coupled dynamics and control law to generate the updated generalized coordinates {\small $\mathbf{q}$} and generalized velocities {\small $\dot{\mathbf{q}}$} for the next integration step.
The generalized coordinates {\small $\mathbf{q}$} are provided to the actuation mapping function to compute the corresponding tendon displacements. These tendon displacements are then converted into motor position commands based on the known actuator shaft diameter and transmitted to the motors.

\begin{figure}
\centering
\includegraphics[width=0.9\columnwidth,trim=1.5cm 1cm 3cm 1.5cm,clip]{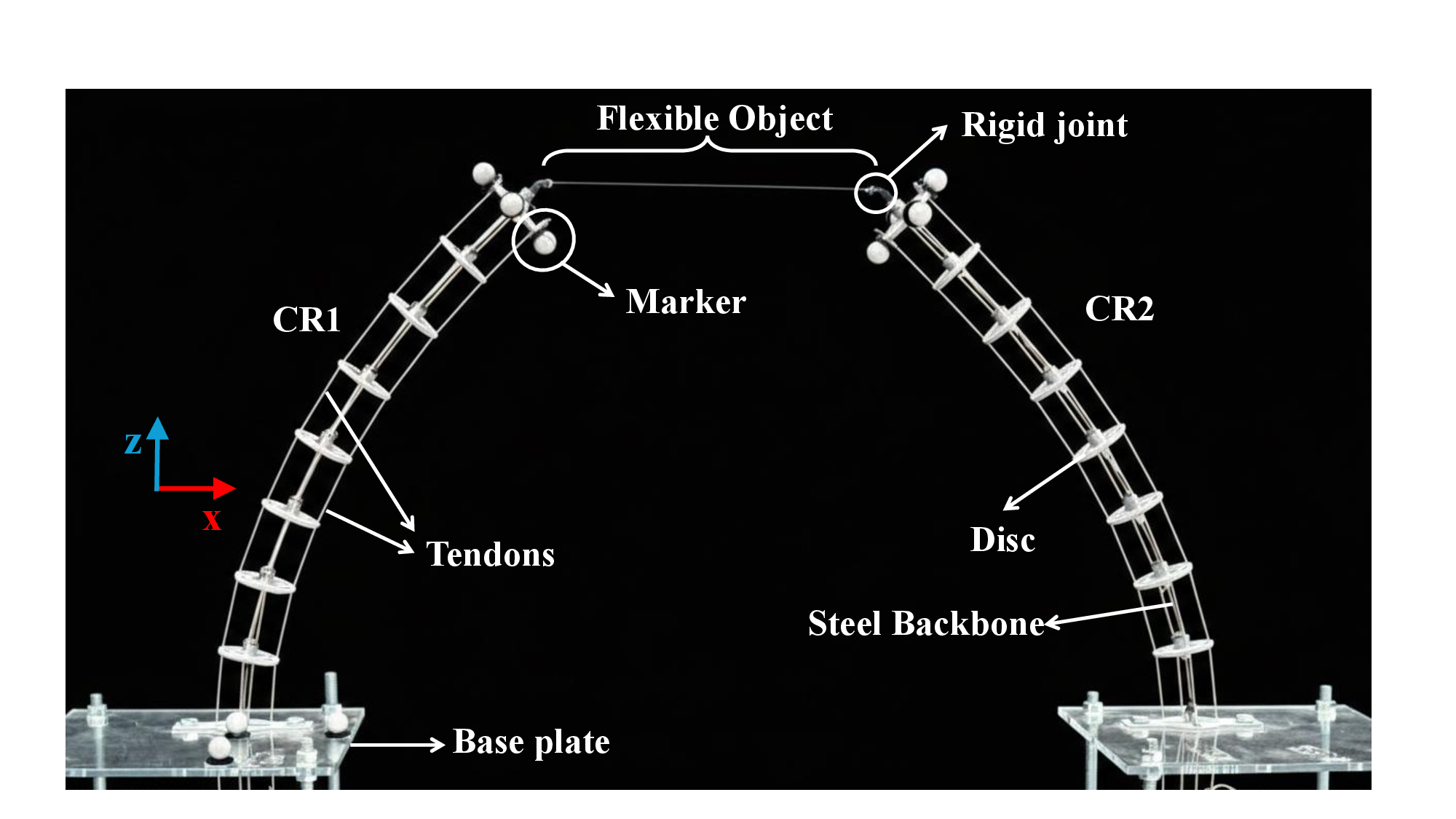}
\caption{Experimental platform of the two-arm tendon-driven CCR system, showing the CRs, flexible object, tendon routing, and markers.}
\label{fig:exp_setup}
\end{figure}

The sensing stage and motor actuation stage operate at {\small $25~\mathrm{Hz}$}, providing synchronized Vicon measurements and motor updates. The entire sensing–computation–actuation process is executed in real time. The computational time per iteration remains below the sampling interval, ensuring stable real-time implementation without buffer accumulation or timing overruns. The controller gains are kept identical to those used in simulation, as listed in Table{\small ~\ref{tab:controller_params}}, to ensure a consistent comparison between simulation and experimental results.

The experimental study follows the same structure as the simulation analysis. First, task-space regulation is performed, followed by trajectory tracking. In both cases, the three control strategies (Adaptive, Nominal, and Baseline) are compared. 
\vspace{-1.5 em}
\subsection{Task-Space Regulation}

The experimental regulation results for the adaptive controller are presented in Fig.~{\small \ref{fig:EXP_regulation_performance}}. Each experiment is repeated five times to evaluate repeatability. The individual trials are illustrated in gray, while the bold black curve represents the mean response across the five runs.
The results demonstrate consistent convergence to the desired position with small steady-state error. The error norm rapidly decreases and remains close to zero after the transient phase, indicating stable real-time implementation.

\begin{figure}
  \centering
  \includegraphics[width=0.48\textwidth]{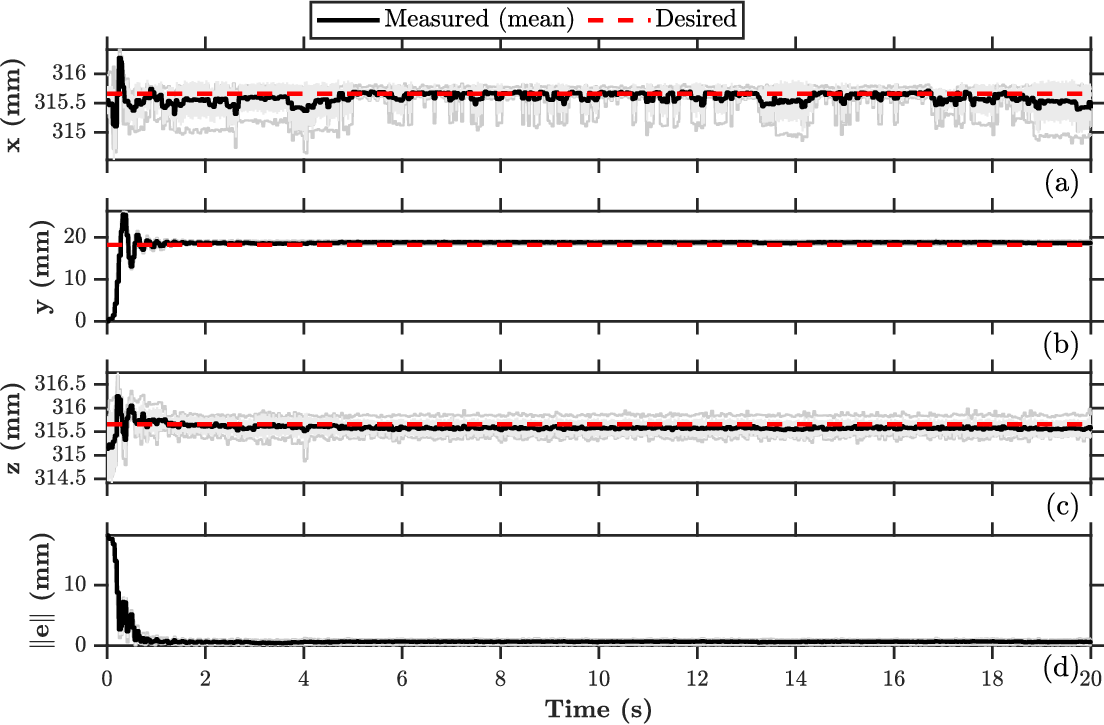}
 \caption{Task-space regulation of the manipulated object midpoint over five trials. 
(a)--(c) $x$-, $y$-, and $z$-coordinates of the midpoint, and (d) error norm. 
Gray curves denote individual trials; the bold curve shows mean response.}
  \label{fig:EXP_regulation_performance}
\end{figure}

\vspace{-1 em}
\subsection{Task-Space Trajectory Tracking}

The experimental tracking performance of the adaptive controller is presented in Fig.~{\small \ref{fig:EXP_tracking_performance}}. The results demonstrate accurate trajectory tracking under real-time implementation. The measured signals closely follow the desired trajectory, and the tracking error remains bounded with small steady-state magnitude. Minor variations between repetitions are attributable to measurement noise, actuator nonlinearities, and unmodeled dynamics inherent to the physical system.

\begin{figure}
  \centering
  \includegraphics[width=0.48\textwidth]{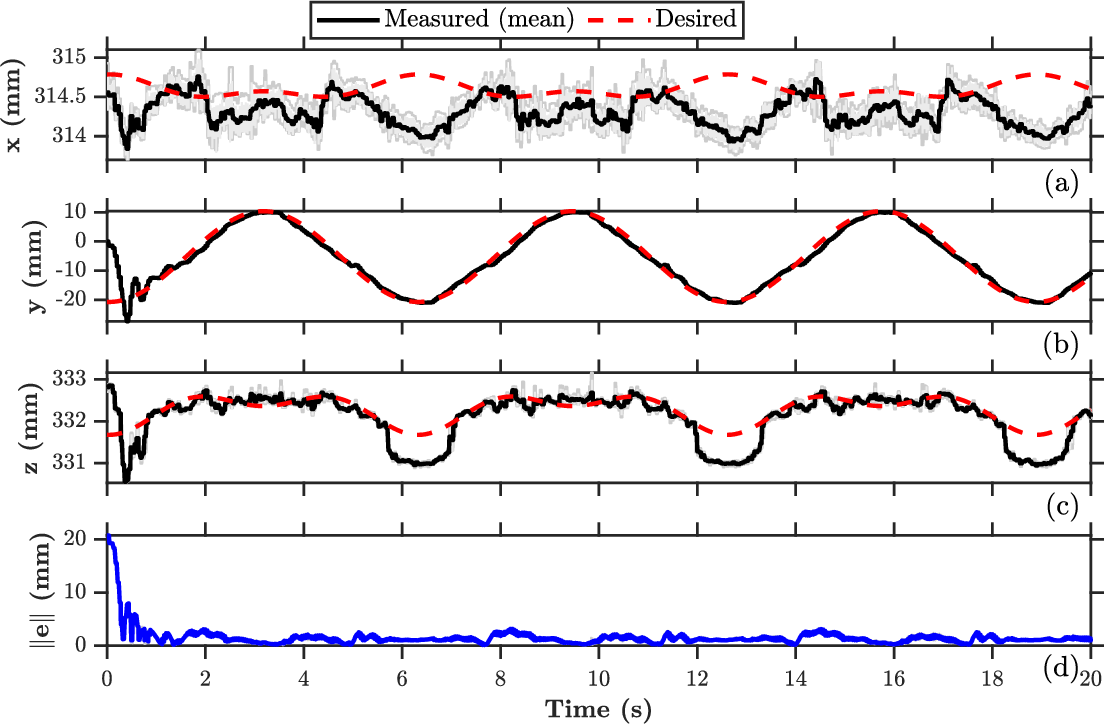}
\caption{Task-space tracking of the manipulated object midpoint over five trials. 
(a)--(c) $x$-, $y$-, and $z$-coordinates of the midpoint, and (d) tracking error. 
Gray curves denote individual trials; the bold curve shows the mean response.}
  \label{fig:EXP_tracking_performance}
\end{figure}

\vspace{-0.5 em}
\subsection{Experimental Comparison of Control Strategies}

A comparative experimental evaluation is performed to assess the performance of the three control strategies under real-time implementation.

\subsubsection{Experimental Regulation} 
The regulation results are presented in Fig.~\ref{fig:EXP_regulation_3comparison}. As shown in Fig.~\ref{fig:EXP_regulation_3comparison}(a)--(c), both the adaptive and baseline controllers converge smoothly to the desired midpoint position, whereas the nominal controller exhibits noticeable fluctuations around the reference. The error norm in Fig.~\ref{fig:EXP_regulation_3comparison}(d), together with the zoomed view over {\small $19$--$20$~s}, further highlights these differences, showing that the adaptive controller achieves the smallest steady-state error, followed by the baseline controller, while the nominal controller yields the largest error. The norm of tendon displacement in Fig.~\ref{fig:EXP_regulation_3comparison}(e) shows a similar trend, with smoother behavior for the adaptive and baseline controllers and more oscillatory variations for the nominal controller. These results confirm that online parameter adaptation improves regulation accuracy under experimental conditions.

\begin{figure}
  \centering
  \includegraphics[width=0.48\textwidth]{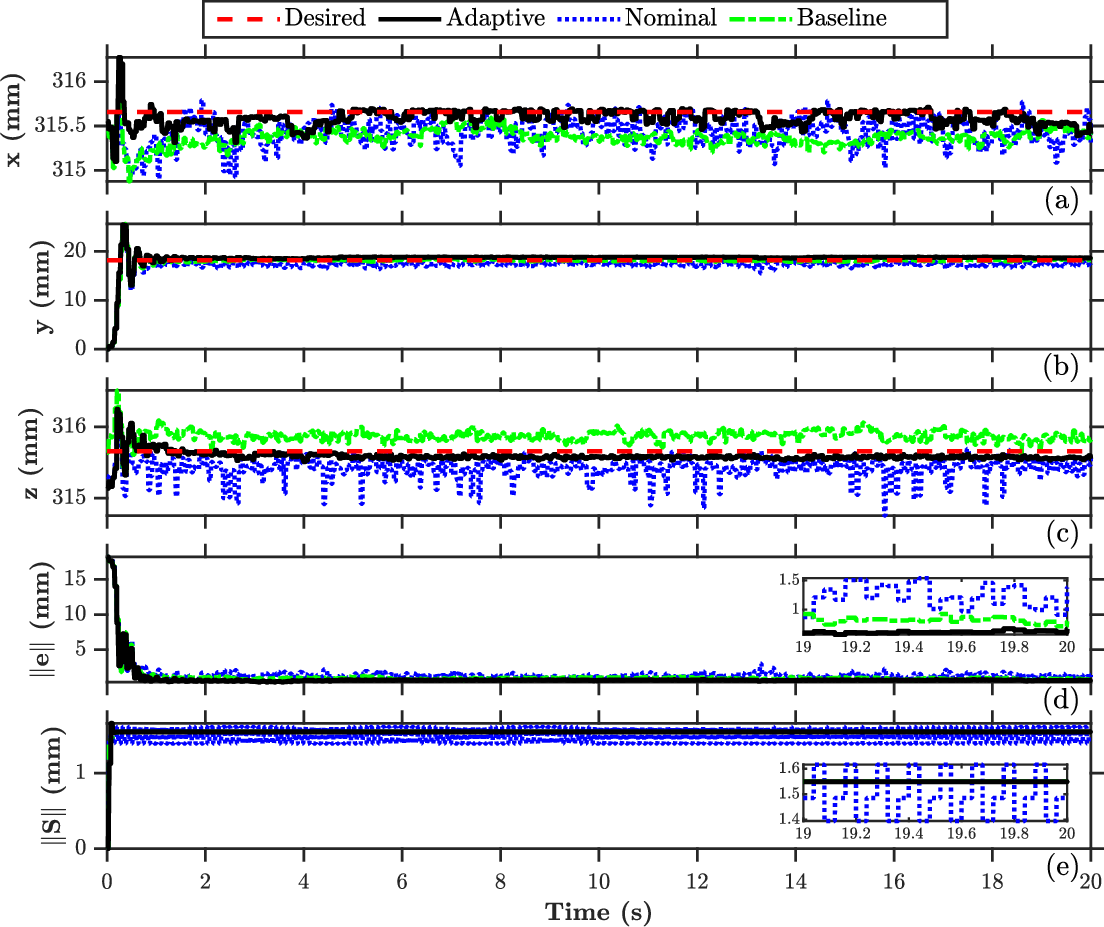}
  \caption{Experimental regulation comparison under three controllers.
(a) $x$, (b) $y$, (c) $z$, (d) tracking error $\|\mathbf{e}\|$, and (e) tendon displacement norm $\|\mathbf{S}\|$.
Insets in (d) and (e) show zoomed views over $19$--$20\,\mathrm{s}$.}
  \label{fig:EXP_regulation_3comparison}
\end{figure}

\subsubsection{Experimental Tracking}
The tracking results are presented in Fig.~\ref{fig:EXP_tracking_3comparison}. As shown in Fig.~\ref{fig:EXP_tracking_3comparison}(a)--(c), the adaptive controller provides the highest tracking accuracy, while the nominal and baseline controllers exhibit larger fluctuations around the desired trajectory. The tracking error in Fig.~\ref{fig:EXP_tracking_3comparison}(d) further confirms that the adaptive controller achieves the smallest error during motion. The norm of tendon displacement in Fig.~\ref{fig:EXP_tracking_3comparison}(e) shows smoother behavior for the adaptive controller, whereas the nominal and baseline controllers.

\begin{figure}
  \centering
  \includegraphics[width=0.48\textwidth]{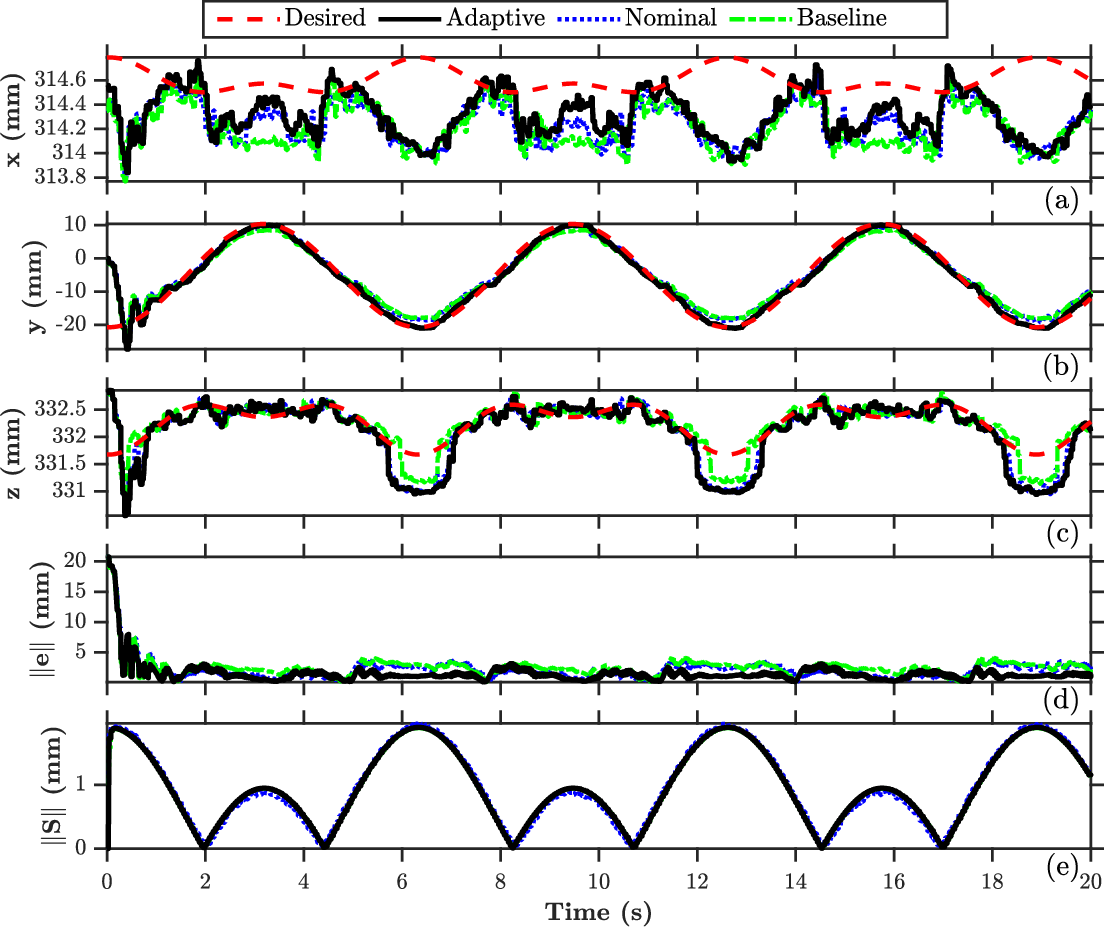}
 \caption{Experimental tracking comparison under three controllers.
(a) $x$, (b) $y$, (c) $z$, (d) tracking error $\|\mathbf{e}\|$, and (e) tendon displacement norm $\|\mathbf{S}\|$.}
  \label{fig:EXP_tracking_3comparison}
\end{figure}

Table~\ref{tab:performance_comparison} summarizes the quantitative performance of the three controllers in simulation and experimental studies for task-space regulation and trajectory tracking. Four metrics are used for evaluation.
The root-mean-square error (RMSE) of the task-space error is used to quantify the average tracking performance over time. The total variation of the error, which characterizes the oscillatory or zig-zag behavior of the error signal, is computed as
{\small
\begin{align*}
    \mathrm{TV}_e 
= \sum_{k=1}^{N-1} 
\left| \|\mathbf{e}(t_{k+1})\| - \|\mathbf{e}(t_k)\| \right|,
\end{align*}
}

\noindent where {\small $\mathbf{e}(t_k)$} denotes the task-space position error at time {\small $t_k$}, and {\small $N$} is the total number of samples.
Similarly, the total variation of tendon displacement is defined using the Euclidean norm of the displacement vector {\small $\mathbf{S} = [S_1 \; S_2 \; S_3]^{\mathsf{T}}$}, and is used to assess actuation smoothness.

In simulation, the number of saturation events is defined as the total number of transitions for which any tendon tension reaches its imposed bound, indicating actuator stress. For the experimental results, the reported values correspond to the mean over five repeated trials. As shown in {\small Table~\ref{tab:performance_comparison}}, the adaptive controller consistently achieves lower RMSE, reduced oscillatory behavior (smaller {\small $\mathrm{TV}_e$} and {\small $\mathrm{TV}_S$}), and minimal saturation events, demonstrating improved accuracy, robustness and smoother actuation under both regulation and tracking conditions.

\begin{table}
\centering
\caption{Performance comparison in simulation and experiment.}
\label{tab:performance_comparison}
\footnotesize
\setlength{\tabcolsep}{2pt}
\renewcommand{\arraystretch}{1}
\resizebox{\columnwidth}{!}{%
\begin{tabular}{l l l c c c c}
\toprule
Domain & Task & Ctrl. & RMSE [mm] & TV$_e$ & TV$_S$ & Sat. \\
\midrule
\multirow{6}{*}{Sim.}
& \multirow{3}{*}{Reg.}
& Adp.  & \textbf{2.03} & \textbf{22.61} & \textbf{2.46} & 2 \\
& & Base. & 2.04 & 23.28 & 2.64 & 2 \\
& & Nom.  & 2.13 & 125.60 & 25.69 & 84 \\
\cmidrule(lr){2-7}
& \multirow{3}{*}{Track.}
& Adp.  & \textbf{0.54} & \textbf{25.70} & \textbf{18.64} & \textbf{0} \\
& & Base. & 0.58 & 7923.29 & 1797.05 & 32967 \\
& & Nom.  & 1.52 & 9411.96 & 1707.03 & 30751 \\
\midrule
\multirow{6}{*}{Exp.}
& \multirow{3}{*}{Reg.}
& Adp.  & \textbf{1.98 $\pm$ 0.12} & \textbf{72.99 $\pm$ 8.15} & \textbf{1.55 $\pm$ 0.02} & -- \\
& & Base. & 2.10 $\pm$ 0.13 & 91.53 $\pm$ 16.94 & 1.80 $\pm$ 0.03 & -- \\
& & Nom.  & 2.33 $\pm$ 0.18 & 265.33 $\pm$ 38.89 & 74.73 $\pm$ 3.17 & -- \\
\cmidrule(lr){2-7}
& \multirow{3}{*}{Track.}
& Adp.  & \textbf{2.50 $\pm$ 0.02} & \textbf{384.17 $\pm$ 4.38} & \textbf{19.36 $\pm$ 1.05} & -- \\
& & Base. & 3.18 $\pm$ 0.02 & 398.87 $\pm$ 5.25 & 20.51 $\pm$ 1.76 & -- \\
& & Nom.  & 2.98 $\pm$ 0.03 & 407.33 $\pm$ 10.80 & 21.42 $\pm$ 1.82 & -- \\
\bottomrule
\end{tabular}%
}
\end{table}

\section{Conclusion}
This paper presented a projected adaptive control framework for CCRs coupled through a closed kinematic chain. The CCR system was modeled using the GVS formulation, and the closed-chain interaction was represented through Pfaffian velocity constraints. By projecting the dynamics onto the null space of the constraint Jacobian, the closed-chain model was expressed in the constraint-consistent motion subspace, providing a suitable foundation for controller design. Based on this formulation, an adaptive control law with parameter-linear regressors was developed to compensate online for uncertain mass and inertial properties of the CRs and the manipulated flexible object. Lyapunov analysis established bounded closed-loop signals and convergence of the sliding variable to zero, which implies asymptotic convergence of the task-space tracking error. Simulation and experimental results in both regulation and trajectory tracking demonstrated that the proposed controller achieves improved accuracy, reduced oscillatory behavior, smoother actuation, and fewer saturation events compared with the nominal and baseline controllers. Overall, the proposed framework provides a systematic approach for dynamic modeling and adaptive control of constrained CCR systems and supports stable real-time co-manipulation of a flexible object. Future work will investigate adaptive position/force control in the presence of contact interactions and learning-enhanced adaptive schemes for handling more complex uncertainties.

\bibliographystyle{IEEEtran}
\bibliography{references}

@book{murray2017mathematical,
  title={A Mathematical Introduction to Robotic Manipulation},
  author={Murray, Richard M and Li, Zexiang and Sastry, S Shankar},
  year={2017},
  publisher={CRC press}
}

@article{renda2020geometric,
  title={A geometric variable-strain approach for static modeling of soft manipulators with tendon and fluidic actuation},
  author={Renda, Federico and Armanini, Costanza and Lebastard, Vincent and Candelier, Fabien and Boyer, Frederic},
  journal={IEEE Robotics and Automation Letters},
  volume={5},
  number={3},
  pages={4006--4013},
  year={2020},
  publisher={IEEE}
}

@article{renda2018discrete,
  title={Discrete cosserat approach for multisection soft manipulator dynamics},
  author={Renda, Federico and Boyer, Fr{\'e}d{\'e}ric and Dias, Jorge and Seneviratne, Lakmal},
  journal={IEEE Transactions on Robotics},
  volume={34},
  number={6},
  pages={1518--1533},
  year={2018},
  publisher={IEEE}
}

@article{armanini2021discrete,
  title={Discrete cosserat approach for closed-chain soft robots: Application to the fin-ray finger},
  author={Armanini, Costanza and Hussain, Irfan and Iqbal, Muhammad Zubair and Gan, Dongming and Prattichizzo, Domenico and Renda, Federico},
  journal={IEEE Transactions on Robotics},
  volume={37},
  number={6},
  pages={2083--2098},
  year={2021},
  publisher={IEEE}
}

@article{boyer2020dynamics,
  title={Dynamics of continuum and soft robots: A strain parameterization based approach},
  author={Boyer, Frederic and Lebastard, Vincent and Candelier, Fabien and Renda, Federico},
  journal={IEEE Transactions on Robotics},
  volume={37},
  number={3},
  pages={847--863},
  year={2020},
  publisher={IEEE}
}

@article{renda2014dynamic,
  title={Dynamic model of a multibending soft robot arm driven by cables},
  author={Renda, Federico and Giorelli, Michele and Calisti, Marcello and Cianchetti, Matteo and Laschi, Cecilia},
  journal={IEEE Transactions on Robotics},
  volume={30},
  number={5},
  pages={1109--1122},
  year={2014},
  publisher={IEEE}
}

@article{hussain2021compliant,
  title={Compliant gripper design, prototyping, and modeling using screw theory formulation},
  author={Hussain, Irfan and Malvezzi, Monica and Gan, Dongming and Iqbal, Zubair and Seneviratne, Lakmal and Prattichizzo, Domenico and Renda, Federico},
  journal={The International Journal of Robotics Research},
  volume={40},
  number={1},
  pages={55--71},
  year={2021},
  publisher={SAGE Publications Sage UK: London, England}
}

@article{aghili2011projection,
  title={Projection-based control of parallel mechanisms},
  author={Aghili, Farhad},
 journal = {Journal of Computational and Nonlinear Dynamics},
volume = {6},
number = {3},
pages = {031009},
year = {2011}
  
}

@article{laschi2016lessons,
  title={Lessons from animals and plants: The symbiosis of morphological computation and soft robotics},
  author={Laschi, Cecilia and Mazzolai, Barbara},
  journal={IEEE Robotics \& Automation Magazine},
  volume={23},
  number={3},
  pages={107--114},
  year={2016},
  publisher={IEEE}
}

@article{simaan2018medical,
  title={Medical technologies and challenges of robot-assisted minimally invasive intervention and diagnostics},
  author={Simaan, Nabil and Yasin, Rashid M and Wang, Long},
  journal={Annual Review of Control, Robotics, and Autonomous Systems},
  volume={1},
  number={1},
  pages={465--490},
  year={2018},
  publisher={Annual Reviews}
}

@article{lotfavar2017cooperative,
  title={Cooperative continuum robots: Concept, modeling, and workspace analysis},
  author={Lotfavar, Amir and Hasanzadeh, Shahir and Janabi-Sharifi, Farrokh},
  journal={IEEE Robotics and Automation Letters},
  volume={3},
  number={1},
  pages={426--433},
  year={2017},
  publisher={IEEE}
}

@article{li2025collaborative,
  title={Collaborative Continuum Robots: A Survey},
  author={Li, Xinyu and Tang, Qian and Yin, Guoxin and Zheng, Gang and Burgner-Kahrs, Jessica and Stefanini, Cesare and Wu, Ke},
  journal={arXiv preprint arXiv:2601.10721},
  year={2025}
}

@article{chikhaoui2018toward,
  title={Toward motion coordination control and design optimization for dual-arm concentric tube continuum robots},
  author={Chikhaoui, Mohamed Taha and Granna, Josephine and Starke, Julia and Burgner-Kahrs, Jessica},
  journal={IEEE Robotics and Automation Letters},
  volume={3},
  number={3},
  pages={1793--1800},
  year={2018},
  publisher={IEEE}
}

@article{peng2023modeling,
  title={Modeling, cooperative planning and compliant control of multi-arm space continuous robot for target manipulation},
  author={Peng, Jianqing and Wu, Haoxuan and Zhang, Chi and Chen, Qihan and Meng, Deshan and Wang, Xueqian},
  journal={Applied Mathematical Modelling},
  volume={121},
  pages={690--713},
  year={2023},
  publisher={Elsevier}
}

@article{jalali2021dynamic,
  title={Dynamic modeling of tendon-driven co-manipulative continuum robots},
  author={Jalali, Amir and Janabi-Sharifi, Farrokh},
  journal={IEEE Robotics and Automation Letters},
  volume={7},
  number={2},
  pages={1643--1650},
  year={2021},
  publisher={IEEE}
}

@article{norouzi2021switching,
  title={A switching image-based visual servoing method for cooperative continuum robots},
  author={Norouzi-Ghazbi, Somayeh and Janabi-Sharifi, Farrokh},
  journal={Journal of Intelligent \& Robotic Systems},
  volume={103},
  number={3},
  pages={42},
  year={2021},
  publisher={Springer}
}

@article{webster2010design,
  title={Design and kinematic modeling of constant curvature continuum robots: A review},
  author={Webster III, Robert J and Jones, Bryan A},
  journal={The International Journal of Robotics Research},
  volume={29},
  number={13},
  pages={1661--1683},
  year={2010},
  publisher={SAGE Publications Sage UK: London, England}
}

@article{cianchetti2014soft,
  title={Soft robotics technologies to address shortcomings in today's minimally invasive surgery: the STIFF-FLOP approach},
  author={Cianchetti, Matteo and Ranzani, Tommaso and Gerboni, Giada and Nanayakkara, Thrishantha and Althoefer, Kaspar and Dasgupta, Prokar and Menciassi, Arianna},
  journal={Soft robotics},
  volume={1},
  number={2},
  pages={122--131},
  year={2014},
  publisher={Mary Ann Liebert, Inc. 140 Huguenot Street, 3rd Floor New Rochelle, NY 10801 USA}
}

@article{burgner2015continuum,
  title={Continuum robots for medical applications: A survey},
  author={Burgner-Kahrs, Jessica and Rucker, D Caleb and Choset, Howie},
  journal={IEEE transactions on robotics},
  volume={31},
  number={6},
  pages={1261--1280},
  year={2015},
  publisher={IEEE}
}

@article{rus2015design,
  title={Design, fabrication and control of soft robots},
  author={Rus, Daniela and Tolley, Michael T},
  journal={Nature},
  volume={521},
  number={7553},
  pages={467--475},
  year={2015},
  publisher={Nature Publishing Group UK London}
}

@article{trivedi2008soft,
  title={Soft robotics: Biological inspiration, state of the art, and future research},
  author={Trivedi, Deepak and Rahn, Christopher D and Kier, William M and Walker, Ian D},
  journal={Applied bionics and biomechanics},
  volume={5},
  number={3},
  pages={99--117},
  year={2008},
  publisher={Taylor \& Francis}
}

@article{DANESH2025105953,
title = {Backstepping control of tendon-driven continuum robots in large deflections using the Cosserat rod model},
author = {Rana Danesh and Farrokh Janabi-Sharifi},
journal = {Mechanism and Machine Theory},
volume = {208},
pages = {105953},
year = {2025},
}

@article{ma2022collaborative,
  title={Collaborative continuum robots for remote engineering operations},
  author={Ma, Nan and Monk, Stephen and Cheneler, David},
  journal={Biomimetics},
  volume={8},
  number={1},
  pages={4},
  year={2022},
  publisher={MDPI}
}

@inproceedings{wang2019design,
  title={Design, control and analysis of a dual-arm continuum flexible robot system},
  author={Wang, Chenxi and Li, Zhi and Ren, Yunfan and Deng, Yuwen and Song, Shuang},
  booktitle={2019 IEEE International Conference on Robotics and Biomimetics (ROBIO)},
  pages={948--953},
  year={2019},
}

@article{cheng2023dexterity,
  title={Dexterity enhancement of continuum robot for natural orifice transluminal endoscopic surgery in the dual-manipulator collaborative space},
  author={Cheng, Tianyu and Zhang, Gang and Sun, Jianjun and Zhang, Tao and Du, Fuxin},
  journal={The International Journal of Medical Robotics and Computer Assisted Surgery},
  volume={19},
  number={4},
  pages={e2516},
  year={2023},
  publisher={Wiley Online Library}
}

@article{dai2023novel,
  title={A novel space robot with triple cable-driven continuum arms for space grasping},
  author={Dai, Yicheng and Li, Zuan and Chen, Xinjie and Wang, Xin and Yuan, Han},
  journal={Micromachines},
  volume={14},
  number={2},
  pages={416},
  year={2023},
  publisher={MDPI}
}

@article{quaicoe2024cooperative,
  title={Cooperative Control of Bimanual Continuum Robots for Automated Knot-Tying in Robot-Assisted Surgical Suturing},
  author={Quaicoe, Enoch and Nada, Ayman and Ishii, Hiroyuki and El-Hussieny, Haitham},
  journal={Journal of Robotics and Control (JRC)},
  volume={5},
  number={4},
  pages={1149--1165},
  year={2024}
}

@article{nuelle2020modeling,
  title={Modeling, calibration, and evaluation of a tendon-actuated planar parallel continuum robot},
  author={Nuelle, Kathrin and Sterneck, Tim and Lilge, Sven and Xiong, Dezhu and Burgner-Kahrs, Jessica and Ortmaier, Tobias},
  journal={IEEE Robotics and Automation Letters},
  volume={5},
  number={4},
  pages={5811--5818},
  year={2020},
  publisher={IEEE}
}

@article{wen2023modeling,
  title={Modeling and analysis of tendon-driven parallel continuum robots under constant curvature and pseudo-rigid-body assumptions},
  author={Wen, Kefei and Burgner-Kahrs, Jessica},
  journal={Journal of Mechanisms and Robotics},
  volume={15},
  number={4},
  pages={041003},
  year={2023},
  publisher={American Society of Mechanical Engineers}
}

@article{wang2024development,
  title={Development of a new cable-driven planar parallel continuum robot using compound kinematic calibration method},
  author={Wang, Zhengyu and Liu, Xuchang and Jia, Zirui and Yu, Xiang and Pei, Zongkun and Yang, Jun},
  journal={Journal of Mechanisms and Robotics},
  volume={16},
  number={10},
  pages={101014},
  year={2024},
  publisher={American Society of Mechanical Engineers}
}

@article{lilge2022kinetostatic,
  title={Kinetostatic modeling of tendon-driven parallel continuum robots},
  author={Lilge, Sven and Burgner-Kahrs, Jessica},
  journal={IEEE Transactions on Robotics},
  volume={39},
  number={2},
  pages={1563--1579},
  year={2022},
  publisher={IEEE}
}

@inproceedings{mahoney2016reconfigurable,
  title={Reconfigurable parallel continuum robots for incisionless surgery},
  author={Mahoney, Arthur W and Anderson, Patrick L and Swaney, Philip J and Maldonado, Fabien and Webster, Robert J},
  booktitle={2016 IEEE/RSJ International Conference on Intelligent Robots and Systems (IROS)},
  pages={4330--4336},
  year={2016},
  
}

@article{wang2026strain,
  title={Strain-based Shape and 3D Force Estimation for Rod-driven Continuum Robots with Stretch Sensors},
  author={Wang, Peiyi and Feliu-Talegon, Daniel and Sun, Yuchen and Xie, Zhexin and Xin, Wenci and Nazeer, Muhammad Sunny and Della Santina, Cosimo and Laschi, Cecilia and Renda, Federico},
  journal={IEEE Transactions on Robotics},
  year={2026},
  publisher={IEEE}
}

@article{wang2021eccentric,
  title={Eccentric tube robots as multiarmed steerable sheaths},
  author={Wang, Jiaole and Peine, Joseph and Dupont, Pierre E},
  journal={IEEE Transactions on Robotics},
  volume={38},
  number={1},
  pages={477--490},
  year={2021},
  publisher={IEEE}
}

@inproceedings{mitros2022design,
  title={Design and quasistatic modelling of hybrid continuum multi-arm robots},
  author={Mitros, Zisos and Sadati, SM Hadi and Nousias, Sotirios and Da Cruz, Lyndon and Bergeles, Christos},
  booktitle={2022 International Conference on Robotics and Automation (ICRA)},
  pages={9607--9613},
  year={2022},
  
}

@article{mathew2022sorosim,
  title={Sorosim: A matlab toolbox for hybrid rigid--soft robots based on the geometric variable-strain approach},
  author={Mathew, Anup Teejo and Hmida, Ikhlas Ben and Armanini, Costanza and Boyer, Frederic and Renda, Federico},
  journal={IEEE Robotics \& Automation Magazine},
  volume={30},
  number={3},
  pages={106--122},
  year={2022},
  publisher={IEEE}
}

@article{yu2023model,
  title={Model-free synchronous motion generation of multiple heterogeneous continuum robots},
  author={Yu, Peng and Tan, Ning and Wu, Yuyang and Qiu, Binbin and Huang, Kai},
  journal={IEEE Transactions on Industrial Informatics},
  volume={20},
  number={3},
  pages={3209--3221},
  year={2023},
  publisher={IEEE}
}

@inproceedings{tan2021synchronous,
  title={Synchronous motion generation of multiple continuum robots based on a Jacobian-estimation strategy},
  author={Tan, Ning and Hu, Ruikun and Wu, Yuyang and Zhang, Xinyu and Ni, Fenglei and Sun, Zhenglong},
  booktitle={2021 IEEE International Conference on Robotics and Biomimetics (ROBIO)},
  pages={475--482},
  year={2021},
  
}

@inproceedings{zhang2022cooperative,
  title={Cooperative control of dual-arm concentric tube continuum robots},
  author={Zhang, Hanna Jiamei and Lilge, Sven and Chikhaoui, M Taha and Burgner-Kahrs, Jessica},
  booktitle={2022 International Conference on Manipulation, Automation and Robotics at Small Scales (MARSS)},
  pages={1--6},
  year={2022},
  
}

@article{sabetian2019self,
  title={Self-collision detection and avoidance for dual-arm concentric tube robots},
  author={Sabetian, Saba and Looi, Thomas and Diller, Eric D and Drake, James},
  journal={IEEE Robotics and Automation Letters},
  year={2019},
  volume={},
  number={},
  pages={1-1},
  publisher={IEEE}
}

@inproceedings{ji2022omnidirectional,
  title={Omnidirectional walking of a quadruped robot enabled by compressible tendon-driven soft actuators},
  author={Ji, Qinglei and Fu, Shuo and Feng, Lei and Andrikopoulos, George and Wang, Xi Vincent and Wang, Lihui},
  booktitle={2022 IEEE/RSJ International Conference on Intelligent Robots and Systems (IROS)},
  pages={11015--11022},
  year={2022},
  
}

@inproceedings{de2013introducing,
  title={Introducing STRAS: A new flexible robotic system for minimally invasive surgery},
  author={De Donno, Antonio and Zorn, Lucile and Zanne, Philippe and Nageotte, Florent and de Mathelin, Michel},
  booktitle={2013 IEEE International Conference on Robotics and Automation},
  pages={1213--1220},
  year={2013},
  
}

@article{hwang2020k,
  title={K-FLEX: a flexible robotic platform for scar-free endoscopic surgery},
  author={Hwang, Minho and Kwon, Dong-Soo},
  journal={The International Journal of Medical Robotics and Computer Assisted Surgery},
  volume={16},
  number={2},
  pages={e2078},
  year={2020},
  publisher={Wiley Online Library}
}

@article{li2020cadaveric,
  title={Cadaveric feasibility study of a teleoperated parallel continuum robot with variable stiffness for transoral surgery},
  author={Li, Changsheng and Gu, Xiaoyi and Xiao, Xiao and Lim, Chwee Ming and Ren, Hongliang},
  journal={Medical \& Biological Engineering \& Computing},
  volume={58},
  number={9},
  pages={2063--2069},
  year={2020},
  publisher={Springer}
}

@article{chen2021shurui,
  title={The SHURUI system: a modular continuum surgical robotic platform for multiport, hybrid-port, and single-port procedures},
  author={Chen, Yuyang and Zhang, Chao and Wu, Zhonghao and Zhao, Jiangran and Yang, Bo and Huang, Jia and Luo, Qingquan and Wang, Linhui and Xu, Kai},
  journal={IEEE/ASME Transactions on Mechatronics},
  volume={27},
  number={5},
  pages={3186--3197},
  year={2021},
  publisher={IEEE}
}

@article{song2021real,
  title={Real-time multi-object magnetic tracking for multi-arm continuum robots},
  author={Song, Shuang and Ge, Han and Wang, Jiaole and Meng, Max Q-H},
  journal={IEEE Transactions on Instrumentation and Measurement},
  volume={70},
  pages={1--9},
  year={2021},
  publisher={IEEE}
}

@article{wu2021closed,
  title={Closed-loop pose control and automated suturing of continuum surgical manipulators with customized wrist markers under stereo vision},
  author={Wu, Baibo and Wang, Longfei and Liu, Xu and Wang, Linhui and Xu, Kai},
  journal={IEEE Robotics and Automation Letters},
  volume={6},
  number={4},
  pages={7137--7144},
  year={2021},
  publisher={IEEE}
}

@inproceedings{yang2020closed,
  title={A closed-loop controller for a continuum surgical manipulator based on a specially designed wrist marker and stereo tracking},
  author={Yang, Haozhe and Wu, Baibo and Liu, Xu and Xu, Kai},
  booktitle={2020 IEEE/ASME International Conference on Advanced Intelligent Mechatronics (AIM)},
  pages={335--340},
  year={2020},
  
}

@inproceedings{xu2009system,
  title={System design of an insertable robotic effector platform for single port access (SPA) surgery},
  author={Xu, Kai and Goldman, Roger E and Ding, Jienan and Allen, Peter K and Fowler, Dennis L and Simaan, Nabil},
  booktitle={2009 IEEE/RSJ International Conference on Intelligent Robots and Systems},
  pages={5546--5552},
  year={2009},
  
}

@inproceedings{reiter2011learning,
  title={A learning algorithm for visual pose estimation of continuum robots},
  author={Reiter, Austin and Goldman, Roger E and Bajo, Andrea and Iliopoulos, Konstantinos and Simaan, Nabil and Allen, Peter K},
  booktitle={2011 IEEE/RSJ International Conference on Intelligent Robots and Systems},
  pages={2390--2396},
  year={2011},
  
}

@article{wang2023vision,
  title={Vision-based markerless tracking for continuum surgical instruments in robot-assisted minimally invasive surgery},
  author={Wang, Longfei and Zhou, Chang and Cao, Yuzhen and Zhao, Ren and Xu, Kai},
  journal={IEEE Robotics and Automation Letters},
  volume={8},
  number={11},
  pages={7202--7209},
  year={2023},
  publisher={IEEE}
}

@article{wang2025closed,
  title={Closed-Loop Cooperative Manipulation of Deformable Tissue via Visual Feedback Using Multiple Continuum Surgical Manipulators},
  author={Wang, Xiang and Kuang, Haomin and Zhu, Chuanxiang and Xu, Kai},
  journal={IEEE Robotics and Automation Letters},
  year={2025},
  volume={10},
  number={5},
  pages={4396-4403},
  publisher={IEEE}
}

\end{document}